\def\eqref#1{equation~\ref{#1}}
\def\1{\bm{1}}
\DeclareMathAlphabet{\mathsfit}{\encodingdefault}{\sfdefault}{m}{sl}
\SetMathAlphabet{\mathsfit}{bold}{\encodingdefault}{\sfdefault}{bx}{n}
\newcommand{\izsym}{\mathsection} % § 모양, 마음에 안 들면 \star, \ast 등으로 바꿔도 됨
\newcolumntype{L}[1]{>{\raggedright\arraybackslash}p{#1}}
\newtheorem{theorem}{Theorem}
\newtheorem{proposition}{Proposition}
\DeclareMathOperator{\cor}{cor}
\title{Spurious Correlation-Aware Embedding Regularization for Worst-Group Robustness}
\author{
\bf{Subeen Park$^1$, 
Joowang Kim$^2$, 
Hakyung Lee$^3$, 
Sunjae Yoo$^1$,
Kyungwoo Song$^1$\thanks{Corresponding author.}} \\
$^1$Yonsei University, Republic of Korea \\
$^2$LG CNS, Republic of Korea \\
$^3$Korea Development Bank, Republic of Korea 
}
\begin{document}

\maketitle

\begin{abstract}

Deep learning models achieve strong performance across various domains but often rely on spurious correlations, making them vulnerable to distribution shifts. This issue is particularly severe in subpopulation shift scenarios, where models struggle in underrepresented groups. While existing methods have made progress in mitigating this issue, their performance gains are still constrained. They lack a theoretical motivation connecting the embedding space representations with worst-group error. 
To address this limitation, we propose Spurious Correlation-Aware Embedding Regularization for Worst-Group Robustness (SCER), a novel approach that directly regularizes feature representations to suppress spurious cues. We theoretically show that worst-group error is influenced by how strongly the classifier relies on spurious versus core directions, as identified from differences in group-wise mean embeddings across domains and classes. 
By imposing theoretical constraints at the embedding level, SCER encourages models to focus on core features while reducing sensitivity to spurious patterns. Through systematic evaluation on multiple vision and language tasks, we show that SCER outperforms prior state-of-the-art methods in worst-group accuracy. Our code is available at \href{https://github.com/MLAI-Yonsei/SCER}{https://github.com/MLAI-Yonsei/SCER}.

\end{abstract}

\section{Introduction}
    \vspace{-4pt}
\label{sec:intro}
Deep neural architectures have transformed performance standards in many domains. However, they are still highly vulnerable to spurious correlations in real-world datasets~\citep{yang2023mitigating,izmailov2022feature,sagawa2020investigation}. Spurious correlations are patterns in the training data that are statistically associated with target labels but lack a genuine causal relationship. Such correlations are often the result of biases in the dataset, causing models to fail when faced with distribution shifts~\citep{ye2024spurious}. Recent studies have shown that this issue is particularly severe in subpopulation shift scenarios, where the distribution of specific subpopulations in the test data differs from that in the training data~\citep {sagawadistributionally, yang2023change}. Models trained with Empirical Risk Minimization (ERM) tend to over-rely on these spurious correlations, as ERM minimizes the average loss without considering subpopulation imbalances~\citep{vapnik2013nature,izmailov2022feature}. As a result, models rely on easily learned but non-robust features, leading to poor worst-group accuracy and unreliable performance across underrepresented subpopulations.

%method 나열 
Numerous approaches have been developed to handle this challenge, which can be broadly grouped into several categories~\citep{yang2023change}. Subpopulation robustness methods ~\citep{sagawadistributionally, duchi2021learning,pmlr-v139-liu21f,yao2022improving,deng2023robust} focus on improving worst-group performance through various techniques such as reweighting, multi-stage training, or specialized loss functions. Domain invariant methods~\citep{arjovsky2019invariant,sun2016deep} attempt to learn features that generalize across different domains by aligning feature distributions. Data augmentation approaches~\citep{zhang2018mixup} create synthetic training examples to improve generalization. Class imbalance methods~\citep{japkowicz2000class, cui2019class, ren2020balanced} address the uneven distribution of samples across classes. However, these approaches primarily influence model learning indirectly through sample reweighting or feature alignment, without explicitly constraining how spurious features are encoded within the embedding space. Consequently, spurious correlations may persist, limiting robustness under distribution shifts.

To address this limitation, we propose Spurious Correlation-Aware Embedding Regularization for Worst-Group Robustness (SCER), a novel method that directly regularizes feature representations to prevent reliance on spurious features. As illustrated in Figure~\ref{fig1}, SCER operates at the embedding level, imposing explicit constraints that guide the model toward learning more robust features.  
In this study, we provide a new theoretical relationship between worst-group error and the structural properties of the representation space under subpopulation shift scenarios, where spurious correlations exist between domain and label. 
We identify two key factors that influence worst-group error through our theoretical analysis. First, for the same-label samples across different domains, we regularize their embeddings to align on essential classification features. Second, for instances sharing the same domain but with different labels, we introduce additional constraints to reduce the impact of domain-specific spurious cues. Through this approach, SCER prevents the model from overfitting to domain-level artifacts, enabling it to learn robust core features more effectively.

By structuring the model’s representation space, SCER improves worst-group accuracy while maintaining strong overall performance. 
Through extensive experiments across image and text datasets, we demonstrate that SCER outperforms prior methods in worst-group accuracy. We also provide quantitative and qualitative evidence that embedding-level constraints yield robust feature representations even under spurious domain shifts, confirming the effectiveness of the SCER.
\vspace{-18pt}
\begin{figure}
   \centering    \includegraphics[width=0.97\textwidth]{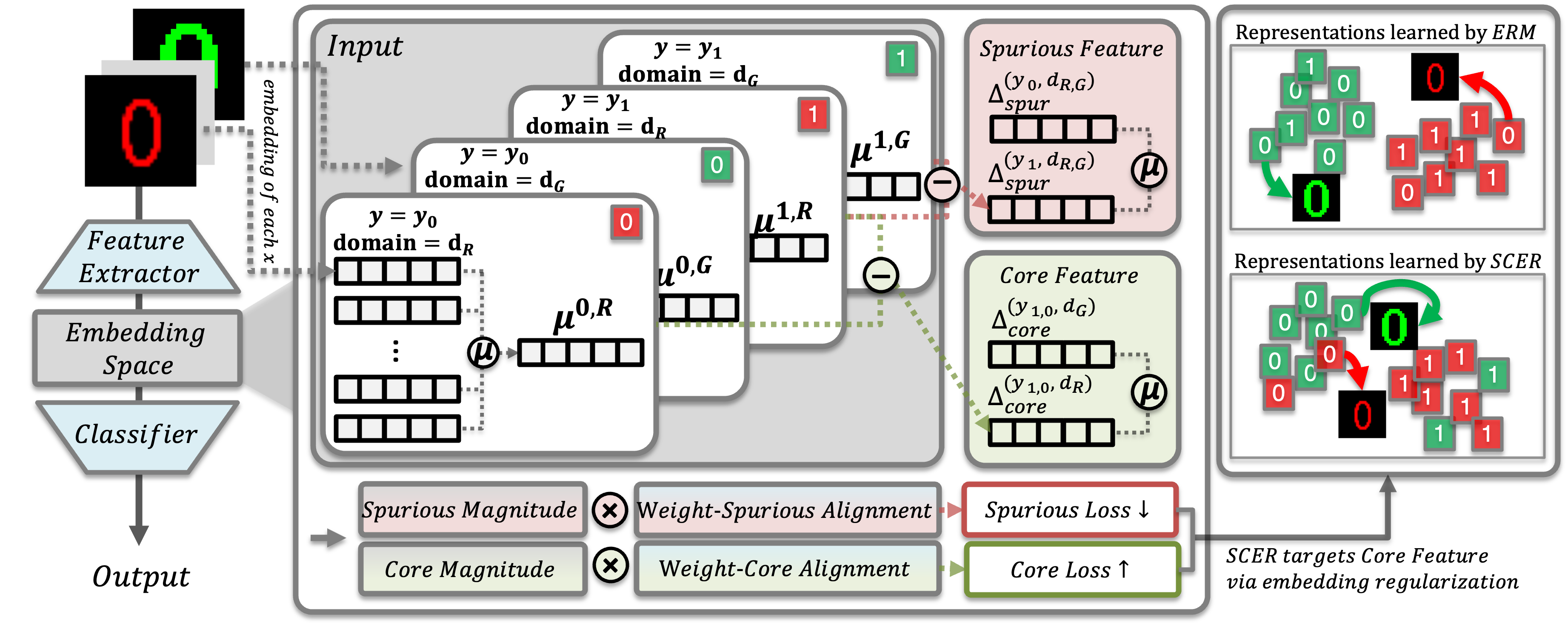}
    %\vspace{-5pt}
   \caption{Overview of the Spurious Correlation-Aware Embedding Regularization (SCER) framework. Input data is encoded into embeddings, from which group-wise mean embeddings are computed for each label-domain pair. SCER decomposes these into \textbf{Spurious Features}, capturing domain differences within each class, and \textbf{Core Features}, capturing class differences within each domain. \textbf{Spurious Magnitude} and \textbf{Core Magnitude} quantify the strength of domain-driven and label-driven variations. \textbf{Weight-Spurious Alignment} and \textbf{Weight-Core Alignment} measure how strongly the classifier aligns with each component. SCER constructs Spurious Loss and Core Loss from these terms to regulate the embedding space, reducing Spurious Loss to suppress domain-specific patterns while increasing Core Loss to reinforce label-relevant signals, thereby improving robustness to domain bias.}
    \label{fig1}
    \vspace{-13pt}
\end{figure}
\vspace{8pt}
\section{Related Works}
    \vspace{-4pt}
\subsection{Subpopulation Shift}
    \vspace{-2.5pt}

Machine learning models degrade under subpopulation shift, where the test subpopulation distributions differ from those in the training data~\citep{yang2023change}. ERM's focus on average training loss makes models susceptible to spurious correlations~\citep{vapnik2013nature,izmailov2022feature}. To address this issue, various methods have been suggested, which can be broadly categorized as sample reweighting and data augmentation strategies. GroupDRO~\citep{sagawadistributionally} reduces worst-group error by dynamically upweighting samples from groups with higher loss. LfF~\citep{bahng2020learning} employs a two-stage approach that first detects spurious correlations by training a biased model and then debiases a second model by reweighting the loss gradient. GIC~\citep{han2024improving} improves the accuracy of group inference by leveraging spurious attribute associations. For data augmentation, LISA~\citep{yao2022improving} enhances robustness by interpolating between subpopulation samples to encourage invariant representations. Similarly, PDE~\citep{deng2023robust} gradually expands the training set, allowing models to first learn core features before incorporating more diverse samples. While prior research focuses primarily on sample reweighting or data augmentation, we propose SCER to directly regularize the embedding space and reduce reliance on spurious features. By explicitly penalizing reliance on spurious cues at the representation level, SCER significantly improves robustness across various distribution shift benchmarks.
\vspace{-5pt}

\subsection{Embedding Space}
    \vspace{-2.5pt}

Recent studies in the text and image domains have actively explored methods to refine latent spaces, aiming to improve representation quality and better capture the underlying structure among samples~\citep{patil2023survey,mingexploit}. However, in the domain generalization setting, research has primarily focused on learning invariant features rather than explicitly modifying the embedding space. Previous work has attempted to mitigate unintended correlations arising from background, lighting, or dataset-specific biases~\citep{yang2023change}. \citet{hermann2020shapes} analyzed how networks differentiate robust and spurious features, while LfF ~\citep{bahng2020learning} introduced a two-stage training approach to guide model learning. More recently, ElRep~\citep{wen2025elastic} applies norm penalties to final-layer representations with demonstrated practical effectiveness. However, these methods typically guide model learning indirectly rather than explicitly constraining spurious feature encoding in the embedding space~\citep{patil2023survey,mingexploit}. In contrast to previous approaches, we propose to directly impose an explicit loss function on the embedding space, ensuring that learned representations minimize reliance on spurious correlations and improve generalization under distribution shifts. Our work addresses these theoretical gaps by providing motivation for embedding regularization and establishing clear connections to robust generalization.
\vspace{-8pt}

% \input{iclr2026/sec/3_methodology2}
% 용어 통일 하잣 !!!
\section{Methodology}
\label{sec:Methodology}
\vspace{-5pt}
\subsection{Problem Setting}
We consider a classification problem where each data point \( x \in \mathcal{X} \)
is associated with a label \( y \in \mathcal{Y} \) and a domain \( d \in \mathcal{D} \),  
where \( \mathcal{Y} = \{y_1, y_2, \dots, y_m\} \) is the set of \( m \) classes and \( \mathcal{D} = \{d_1, d_2, \dots, d_k\} \) is the set of \( k \) domains.  
Each pair \((y, d) \in \mathcal{Y} \times \mathcal{D}\) defines a \emph{subpopulation}, which may exhibit a \emph{spurious correlation} if \( d \) is not causally related to \( y \) but appears disproportionately often with certain \( y \) values.
\noindent
We aim to learn a classifier \( f: \mathcal{X} \rightarrow \mathcal{Y} \), which consists of a feature extractor \( f_w: \mathcal{X} \rightarrow \mathbb{R}^p \) and a classifier \( f_\beta: \mathbb{R}^p \rightarrow \mathcal{Y} \), such that \( f(x) = f_\beta(f_w(x)) \). A standard ERM objective aims to minimize
\[
\min_{f} \;\; \mathbb{E}_{(x,y)\sim\mathbb{P}} \bigl[\ell\bigl(f(x),\,y\bigr)\bigr],
\]
where $\ell$ is the loss and \( \mathbb{P} \) is the full data distribution. Although ERM effectively reduces \emph{average} error, it can fail on \emph{minority groups}, leading to large misclassification rates in those groups.

%\paragraph{Subpopulation Shift.}
%We define a \emph{subpopulation shift} as a systematic imbalance in certain pairs $(y,d)$ relative to a more balanced ideal. Consequently, a key objective is to minimize \emph{worst-group error}:
%\[
%\min_{f} \;\max_{(y,d)} \;\;
%   \mathbb{E}_{(x,y,d)\sim\mathcal{P}_{y,d}}
%      \bigl[\ell\bigl(f(x),\,y\bigr)\bigr],
%\]
%where $\mathcal{P}_{y,d}$ is the conditional distribution for the subpopulation $(y,d)$. By explicitly addressing the group with the highest risk, one achieves more robust performance despite severe imbalances in label--domain frequencies.

\paragraph{Subpopulation Shift.}
We consider a learning setting where performance should remain robust across all subpopulations defined by label-domain pairs $(y, d)$. To address imbalances among these groups, we adopt the worst-case optimization framework of Group Distributionally Robust Optimization (GroupDRO), which can be formulated as
\begin{equation*}
\min_{f}\; \max_{q \in \mathcal{Q}} \;
\sum_{(y,d) \in \mathcal{Y} \times \mathcal{D}} 
q_{(y,d)} \cdot 
\mathbb{E}_{(x,y,d)\sim \mathbb{P}_{y,d}}
\left[ \ell\bigl(f(x),\,y\bigr) \right]
\end{equation*}
Here, $\mathbb{P}_{y,d}$ represents the data distribution conditioned on subpopulation $(y,d)$, $q$ represents how much weight to give each subpopulation during training, and $\mathcal{Q}$ denotes the set of all possible ways to assign these weights. This minimax formulation ensures robustness across all subpopulations.

To further enhance robustness, we introduce a regularization approach motivated by the decomposition of worst-group error into spurious and core components within the embedding space. Rather than addressing group errors solely through loss optimization, we incorporate structural regularization that encourages the classifier to align with label-consistent \textbf{Core Feature} while penalizing dependence on domain-specific \textbf{Spurious Feature}. These regularization terms, formulated through directional correlations in the embedding space, are detailed in the following section.
\begin{figure}
    \centering    \includegraphics[width=0.98\textwidth]{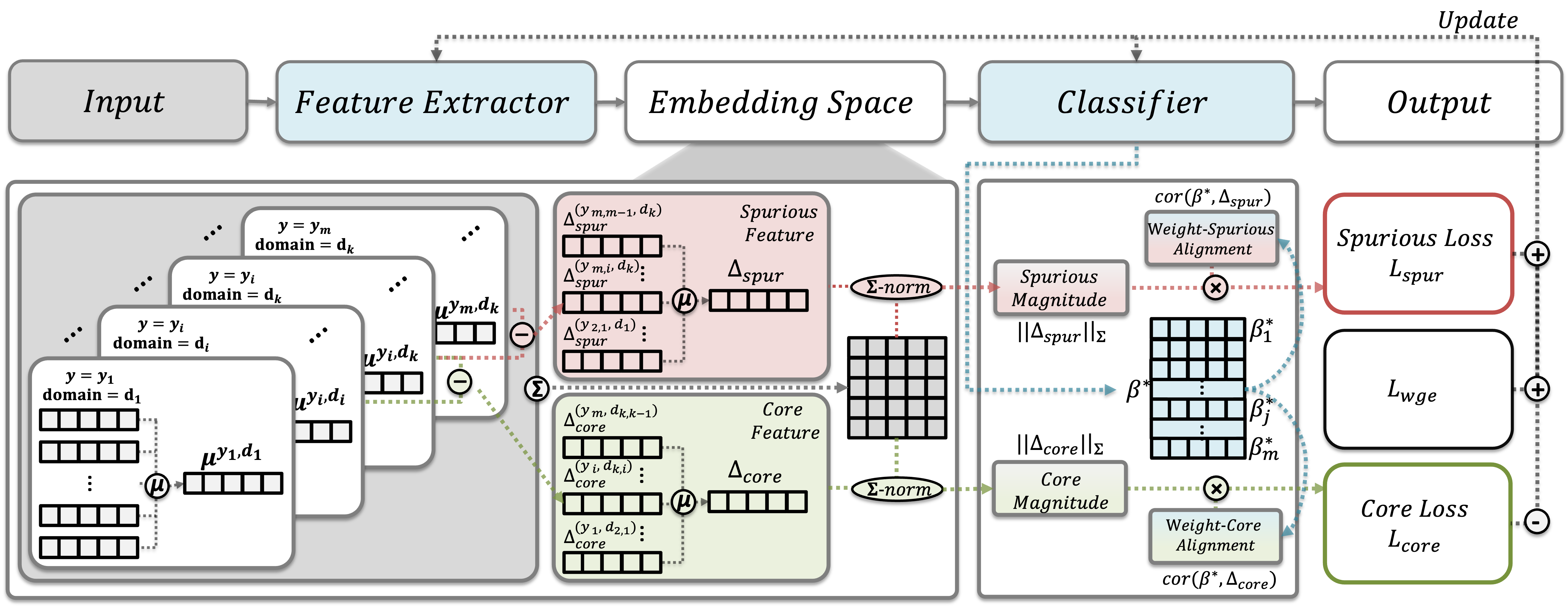}
    \vspace{-5pt}
    \caption{Overview of the SCER training framework.  Input data is encoded into embeddings, from which group-wise mean embeddings are computed to derive spurious and core directions. The framework combines worst-group classification loss with embedding regularization that penalizes spurious alignment and promotes core alignment.}
    \label{fig_process}
\vspace{-10pt}
\end{figure}
\subsection{SCER}

We propose SCER, which enhances the generalization performance of classifiers by decomposing the worst-group error and applying embedding-level regularization to each component, as shown in Figure~\ref{fig_process}. 
Let \( x_{\text{emb}} = f_w(x) \in \mathbb{R}^p \) denote the embedding vectors of input \(x\), obtained by a feature extractor \(f_w\) with parameters \(w\), where \(p\) is the embedding dimension. For each subpopulation defined by a label--domain pair \((y,d)\), we define its \textit{mean embedding} as
\[
\mu^{(y,d)} = \mathbb{E}_{x \sim\mathbb{P}_{y,d}}[f_w(x)] \in \mathbb{R}^p.
\]
\noindent
We compute mean embeddings as representative vectors for each subpopulation, enabling decomposition of worst-group error into two primary components.

\textbf{Spurious Feature.}  The embedding mean difference in feature representations across different domains within the same class. Since we assume that label--domain pairs $(y,d)$ may exhibit spurious correlations, such differences should be minimized to prevent the model from overfitting to domain-specific artifacts. Formally, the spurious difference for a given class $y$ is defined as 
\[
\Delta_{\text{spur}}^{(y,d_{i,j})} = \left|\mu^{(y,d_i)} - \mu^{(y,d_j)}\right|, \quad \forall\, y \in \mathcal{Y},\; d_i, d_j \in \mathcal{D},\; d_i \ne d_j.
\]
\noindent
\textbf{Core Feature.}  The embedding mean difference in feature representations between different classes within the same domain. This quantity captures the actual signal that the classifier should leverage for robust prediction. Formally, the core difference for a given domain $d$ is defined as
\[
\Delta_{\text{core}}^{(y_{i,j},d)} = \left|\mu^{(y_i,d)} - \mu^{(y_j,d)}\right|, \quad \forall\, y_i, y_j \in \mathcal{Y},\, d \in \mathcal{D},\, y_i \ne y_j.
\]
The defined $\Delta_{\text{spur}}^{(y,d_{i,j}) }\in \mathbb{R}^p$ and $\Delta_{\text{core}}^{(y_{i,j},d)} \in \mathbb{R}^p$ capture key structural variations in the data. To generalize these quantities across multiple classes and domains, we compute global directions using aggregated differences across groups. We define the \textbf{Spurious Direction} $\Delta_{\text{spur}} \in \mathbb{R}^p$ as the average of differences in mean embeddings across domains within the same class, and the \textbf{Core Direction} $\Delta_{\text{core}} \in \mathbb{R}^p$ as the average of differences across classes within the same domain
\begin{align*}
\Delta_{\text{spur}} = \mathbb{E}_{y \in \mathcal{Y}} \left[ \Delta_{\text{spur}}^{(y,d_{i,j})} \right],  \quad
\Delta_{\text{core}} = \mathbb{E}_{d \in \mathcal{D}} \left[ \Delta_{\text{core}}^{(y_{i,j},d)} \right],
\end{align*}

 To ensure proper interpretation based on theoretical insights, we normalize these direction vectors using the $\Sigma$-norm, defined as $\|v\|_\Sigma = \sqrt{v^\top \Sigma v}$ for $v \in \mathbb{R}^p$, where $\Sigma \in \mathbb{R}^{p \times p}$ denotes the empirical covariance matrix of the embedding vectors. This normalization accounts for the geometric structure of the embedding space, producing scalar quantities \textbf{Spurious Magnitude} $\|\Delta_{\text{spur}}\|_\Sigma$ and \textbf{Core Magnitude} $\|\Delta_{\text{core}}\|_\Sigma$ for use in regularization.
%where $\Sigma \in \mathbb{R}^{p \times p}$ denotes the empirical covariance matrix computed from all embedding vectors across the entire training dataset:
%\begin{equation*}
%\Sigma = \frac{1}{N} \sum_{i=1}^N (x_{\text{emb}}^{(i)} - \bar{\mu})(x_{\text{emb}}^{(i)} - \bar{\mu})^T
%\end{equation*}
%where $\bar{\mu} = \frac{1}{N} \sum_{i=1}^N x_{\text{emb}}^{(i)}$ is the overall mean embedding.
We also derive regularization terms by analyzing how classifier weights align with the Spurious and Core Directions. The key idea is that overreliance on Spurious Directions leads to poor generalization on minority groups, while alignment with Core Directions supports robust performance. %Let $\beta^* = [\beta_1, \dots, \beta_j, \dots,\beta_m] \in \mathbb{R}^{p \times m}$ denote the weight matrix of the optimally trained linear classifier trained on the model \( f \), where each column \( \beta_j \in \mathbb{R}^p\) corresponds to the weight vector for class \( j \).
Let $\beta^* = [\beta^*_1, \dots, \beta^*_j, \dots, \beta^*_m] \in \mathbb{R}^{p \times m}$ denote the current weight matrix of the classifier $f_\beta$, where each column $\beta^*_j \in \mathbb{R}^p$ corresponds to the weight vector for class $j$. During training, we use the current classifier weights to compute the alignment measures. To quantify the alignment between classifier weights and directional signals, we compute the average class-wise correlation with each direction using the $\Sigma$-norm. We refer to these measures as \textbf{Weight-Spurious Alignment} and \textbf{Weight-Core Alignment}
\begin{equation}
\text{cor}(\beta^*, \Delta_{\text{spur}}) = \frac{1}{m} \sum_{j=1}^m \frac{\langle \beta^*_j, \Delta_{\text{spur}} \rangle}{\|\beta^*_j\|_\Sigma \cdot \|\Delta_{\text{spur}}\|_\Sigma}, \quad
\text{cor}(\beta^*, \Delta_{\text{core}}) = \frac{1}{m} \sum_{j=1}^m \frac{\langle \beta^*_j, \Delta_{\text{core}} \rangle}{\|\beta^*_j\|_\Sigma \cdot \|\Delta_{\text{core}}\|_\Sigma}
\label{eq:correlation-metric}
\end{equation}
These correlation terms measure how closely the classifier's decision boundaries align with spurious or core feature directions, providing the basis for our embedding-level regularization. 

Using this definition, we define the Spurious and Core Loss as:
\vspace{5pt}
\begin{align*}
    \mathcal{L}_{\text{spur}} &= \text{cor}(\beta^*, \Delta_{\text{spur}}) \|\Delta_{\text{spur}}\|_\Sigma, \quad
    \mathcal{L}_{\text{core}} = \text{cor}(\beta^*, \Delta_{\text{core}}) \|\Delta_{\text{core}}\|_\Sigma.
\end{align*}

\noindent
The \textbf{Spurious Loss} $\mathcal{L}_{\text{spur}}$ penalizes alignment with spurious directions, scaled by the magnitude of spurious variation in the data. The \textbf{Core Loss} $\mathcal{L}_{\text{core}}$ encourages alignment with core directions, with the negative sign ensuring that more substantial core alignment reduces the overall loss. 
Our loss formulation penalizes alignment with spurious directions and encourages alignment with core directions. By introducing control parameters $\lambda_{\text{spur}}$ and $\lambda_{\text{core}}$, we derive our embedding loss function:
\vspace{2pt}
\begin{equation}
    \mathcal{L}_{\text{embedding}} = \lambda_{\text{spur}} \mathcal{L}_{\text{spur}} - \lambda_{\text{core}} \mathcal{L}_{\text{core}}.
    \label{eq:embedding-loss}
\end{equation}

The final SCER objective combines embedding loss with worst-group classification loss $\mathcal{L}_{\text{wge}}$:
\vspace{3pt}
\begin{equation}
\mathcal{L}_{\text{total}} = \mathcal{L}_{\text{wge}} + \mathcal{L}_{\text{embedding}}.
\label{equ_final}
\end{equation}
\noindent
This loss formulation mitigates overfitting to domain-specific biases by reducing intra-class variation across domains while preserving inter-class separability. SCER promotes robust decision boundaries, enabling strong worst-group performance under distribution shifts. For additional implementation details, please refer to Appendix~\ref{app_algo}.

\subsection{Theoretical Analysis of SCER}

\label{sec:theory}

We now present a theoretical analysis that formally decomposes worst-group error. 
The following theorem demonstrates that worst-group error can be decomposed into the classifier's reliance on spurious versus core components, thereby providing theoretical motivation for SCER's regularization of these components. Detailed proofs are provided in Appendix~\ref{sec:theory}. \\

\begin{theorem}[Worst-group Error Decomposition]
\setlength{\parskip}{2pt} 
Consider a classification problem where each embedding vector  $x \in \mathbb{R}^p$ is associated with a label $\mathcal{Y} = \{y_{-1}, y_{+1}\}$ and a domain $\mathcal{D} = \{d_R, d_G\}$. Each pair $(y, d) \in \mathcal{Y} \times \mathcal{D}$ defines a \emph{subpopulation}. We assume that the data follows a group-conditional Gaussian distribution $x \mid (y,d) \sim \mathcal{N}(\mu^{(y,d)}, \Sigma)$ where $\Sigma$ is positive definite. We further assume that $\Delta_{\text{core}} = \mu^{(y_{+1}, d)} - \mu^{(y_{-1}, d)}$ is constant across $d$, $\Delta_{\text{spur}} = \mu^{(y, d_R)} - \mu^{(y, d_G)}$ is constant across $y$, $\Sigma^{(d_R)} = \Sigma^{(d_G)} = \Sigma$, $\mathbb{P}(y = \pm 1) = \mathbb{P}(d \in \{d_R,d_G\}) = \frac{1}{2}$, and $\mathbb{P}(y,d) \approx 0$ for some pairs $(y,d)$ under extreme spurious correlations. For classifier $f_\beta = \text{sign}(\beta^{*\top} x)$ with $\beta^* \in \mathbb{R}^p$, the worst-group error can be decomposed as

\[
E_{\mathrm{wge}} = \Phi\!\left( \pm \tfrac{1}{2}\,\cor(\beta^*,\Delta_{\mathrm{spur}})\,\|\Delta_{\mathrm{spur}}\|_\Sigma  -{\tfrac{1}{2}\,}\cor(\beta^*,\Delta_{\mathrm{core}})\,\|\Delta_{\mathrm{core}}\|_\Sigma \right).
\]
\label{thm:wge-extreme-shift}
\end{theorem}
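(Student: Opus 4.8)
The plan is to reduce the $p$-dimensional problem to a family of one-dimensional Gaussian tail probabilities, one for each of the four subpopulations $(y,d)$, and then to maximize over them. Because $\Delta_{\mathrm{core}}$ does not depend on the domain and $\Delta_{\mathrm{spur}}$ does not depend on the class, the four means $\{\mu^{(y,d)}\}$ form a parallelogram with edges parallel to $\Delta_{\mathrm{core}}$ and $\Delta_{\mathrm{spur}}$; combining this with $\mathbb{P}(y=\pm1)=\mathbb{P}(d\in\{d_R,d_G\})=\tfrac12$ and the extreme-correlation assumption that $\mathbb{P}(y,d)\approx 0$ on the two minority cells fixes the center of the parallelogram, which in turn places the zero-threshold hyperplane $\{x:\beta^{*\top}x=0\}$ in a definite position relative to the means. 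Concretely, each $\mu^{(y,d)}$ becomes a signed combination of $\Delta_{\mathrm{core}}$ and $\Delta_{\mathrm{spur}}$ in which the coefficient of $\Delta_{\mathrm{spur}}$ changes sign depending on whether $(y,d)$ is spurious-aligned (a majority cell) or spurious-conflicting (a minority cell).

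Next I would note that, for a fixed group $(y,d)$, the scalar score $\beta^{*\top}x$ is Gaussian with mean $\beta^{*\top}\mu^{(y,d)}$ and variance $\beta^{*\top}\Sigma\beta^{*}=\|\beta^{*}\|_\Sigma^{2}$, using the common-covariance assumption $\Sigma^{(d_R)}=\Sigma^{(d_G)}=\Sigma$. Hence the per-group error is $\Phi(-\,y\,\beta^{*\top}\mu^{(y,d)}/\|\beta^{*}\|_\Sigma)$. Substituting the parallelogram parametrization, the standardized margin $y\,\beta^{*\top}\mu^{(y,d)}/\|\beta^{*}\|_\Sigma$ splits into a core part proportional to $\beta^{*\top}\Delta_{\mathrm{core}}/\|\beta^{*}\|_\Sigma$ that is identical for all four groups (the label sign $y$ cancels the orientation of the within-domain class gap) plus a spurious part proportional to $\beta^{*\top}\Delta_{\mathrm{spur}}/\|\beta^{*}\|_\Sigma$ whose sign is $+$ on majority cells and $-$ on minority cells. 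Repackaging $\beta^{*\top}\Delta/\|\beta^{*}\|_\Sigma$ into the $\Sigma$-normalized correlation $\cor(\beta^{*},\Delta)$ and the $\Sigma$-magnitude $\|\Delta\|_\Sigma$ of \eqref{eq:correlation-metric} then turns each of the four errors into $\Phi$ evaluated at (a shared core term) $\pm$ (a spurious term).

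Finally, since $\Phi$ is strictly increasing, $E_{\mathrm{wge}}=\max_{(y,d)}\Phi(\cdot)=\Phi(-\min_{(y,d)}\mathrm{margin}_{(y,d)})$, so it remains to check that the smallest margin occurs at a spurious-conflicting cell: there the core term is unchanged while the spurious term is subtracted, which under the regime's sign conditions strictly enlarges the error relative to the two majority cells. Which of the two minority cells attains the minimum is precisely the source of the $\pm$ in the statement — equivalently it records the sign of $\cor(\beta^{*},\Delta_{\mathrm{spur}})$ — and collecting terms yields the claimed expression. I expect the main obstacle to be the careful geometric bookkeeping behind these last steps: (i) keeping the coordinate system consistent so that the no-bias classifier together with the $\mathbb{P}=\tfrac12$ and extreme-correlation assumptions pin the boundary, which is what fixes the relative weighting of the spurious and core terms inside $\Phi$; and (ii) rigorously arguing, via a short monotonicity comparison of the four $\Phi$-values, that a spurious-conflicting cell is indeed the worst, which needs the mild sign assumptions of the regime (notably that the shared core alignment is positive, so that term helps all groups equally). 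The Gaussian reduction and the parallelogram decomposition themselves are routine.
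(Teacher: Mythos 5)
Your proposal follows essentially the same route as the paper's proof: standardize the per-group score $\beta^{*\top}x$ by $\|\beta^*\|_\Sigma$ to obtain four Gaussian tail errors, use the parallelogram structure of the group means (with the midpoint of the two majority means pinning the threshold via $\beta_0=-\beta^{*\top}\mathbb{E}[x]$) to write each standardized margin as a shared core term plus a sign-flipping spurious term, and take the maximum through monotonicity of $\Phi$, with the $\pm$ recording the sign of $\cor(\beta^*,\Delta_{\mathrm{spur}})$. One caveat: executed carefully, your centering argument (like the paper's own intermediate expressions for the four subgroup errors) produces a factor $\tfrac12$ on \emph{both} the spurious and the core terms, whereas the stated result keeps $\tfrac12$ only on the spurious term; this discrepancy originates in the paper's final substitution step rather than in your approach.
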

\vspace{-20pt}
\paragraph{Remark}
%The sign $\pm$ appearing in Theorem~\ref{thm:wge-extreme-shift} reflects which subgroup constitutes the worst group. When $\Delta_{\mathrm{spur}}^\top\beta^*>0$ (the classifier aligns with the spurious direction, as implied by the ERM solution in the appendix), the \emph{unseen} minority subgroups incur higher error, corresponding to the $+$ sign. The $-$ sign corresponds to the rare case where seen subgroups are worst. In the typical ERM regime with spurious correlations the $+$ sign applies; we retain $\pm$ for generality. See Appendix~\ref{sec:theory} for the detailed derivation.

The sign $\pm$ in Theorem~\ref{thm:wge-extreme-shift} reflects which subgroup constitutes the worst group. When $\Delta_{\mathrm{spur}}^\top\beta^*>0$, the unseen minority subgroups incur higher error, corresponding to the $+$ sign; the $-$ sign arises in the rare case where seen subgroups are worst. In the typical ERM regime with spurious correlations, the $+$ sign applies, and we retain $\pm$ for generality. See Appendix~\ref{sec:theory} for the detailed derivation.

\vspace{5pt}
The Theorem shows that minimizing $E_{\text{wge}}$ requires reducing 
$\operatorname{cor}(\beta^*, \Delta_{\text{spur}})\,\|\Delta_{\text{spur}}\|_\Sigma$
while simultaneously increasing 
$\operatorname{cor}(\beta^*, \Delta_{\text{core}})\,\|\Delta_{\text{core}}\|_\Sigma$. The Weight-Spurious Alignment term $\text{cor}(\beta^*, \Delta_{\text{spur}})$ measures how much the classifier weights align with domain-specific variations within the same class, indicating the dependence on spurious features. The Spurious Magnitude $\|\Delta_{\text{spur}}\|_\Sigma$ quantifies the divergence in the distribution of data between domains within the same label; thus, minimizing this component effectively reduces spurious correlations. In contrast, the Weight-Core Alignment term $\text{cor}(\beta^*, \Delta_{\text{core}})$ captures how well the classifier weights align with the true labels' discriminatory directions that are consistent across domains. 
The Core Magnitude $\|\Delta_{\text{core}}\|_\Sigma$ captures the separation between different label distributions across domains, and augmenting this term enhances the model's ability to detect core predictive patterns.
\vspace{-10pt}
\begin{table*}[t]
\centering
\caption{Performance comparison on multiple datasets, each exhibiting different levels of spurious correlations. $^{\dagger}$ denotes the performance reported from~\citet{yang2023change} except for ColorMNIST. $^{\ddagger}$ denotes the performance reported from \citet{deng2023robust} for Waterbird and CelebA. $^{\dagger\dagger}$ denotes the performance reported from \citet{wen2025elastic} for Waterbird and CelebA. For the remaining datasets, the detailed experimental settings follow \citet{yang2023change}, as described in Appendix~\ref{sec:app_base}. SCER achieves the highest worst-group accuracy in Waterbirds, CelebA, MetaShift, and ColorMNIST, demonstrating its robustness in subpopulation shift scenarios.}
\vspace{-4pt}
\resizebox{\textwidth}{!}{%
\begin{tabular}{lcccccccc}
\toprule
\textbf{Algorithm} & \multicolumn{2}{c}{\textbf{Waterbirds}} & \multicolumn{2}{c}{\textbf{CelebA}} & \multicolumn{2}{c}{\textbf{MetaShift}} & \multicolumn{2}{c}{\textbf{ColorMNIST ($\rho=80\%$)}} \\
& \textbf{Avg Acc} & \textbf{Worst Acc} & \textbf{Avg Acc} & \textbf{Worst Acc} & \textbf{Avg Acc} & \textbf{Worst Acc} & \textbf{Avg Acc} & \textbf{Worst Acc} \\
\midrule
ERM$^{\dagger}$        & 84.1 $\pm$1.7  & 69.1 $\pm$4.7  & 95.1 $\pm$0.2  & 62.6 $\pm$1.5  & 91.3 $\pm$0.3  & 82.6 $\pm$0.4  & 38.2 $\pm$2.4  & 30.9 $\pm$2.7  \\
GroupDRO$^{\dagger}$   & 88.8 $\pm$1.8  & 78.6 $\pm$1.0  & 91.4 $\pm$0.6  & 89.0 $\pm$0.7  & 91.0 $\pm$0.1  & 85.6 $\pm$0.4  & 73.5 $\pm$0.3  & 73.1 $\pm$0.1  \\
LISA$^{\dagger}$       & 92.8 $\pm$0.2  & 88.7 $\pm$0.6  & 92.6 $\pm$0.1  & 86.3 $\pm$1.2  & 89.5 $\pm$0.4  & 84.1 $\pm$0.4  & 73.8 $\pm$0.1  & 73.2 $\pm$0.1  \\
ReSample$^{\dagger}$     & 89.4 $\pm$0.9  & 77.7 $\pm$1.2  & 92.0 $\pm$0.8  & 87.4 $\pm$0.8  & 91.2 $\pm$0.1  & 85.6 $\pm$0.4  & 73.7 $\pm$0.1  & 72.2 $\pm$0.3  \\
PDE$^{\ddagger}$       & 92.4 $\pm$0.8  & 90.3 $\pm$0.3  & 92.4 $\pm$0.8  & 91.0 $\pm$0.4  & 87.4 $\pm$0.1  & 78.1 $\pm$0.1  & 76.6 $\pm$0.8  & 72.9 $\pm$0.3  \\
ElRep$^{\dagger\dagger}$       &92.9 $\pm$0.7  & 88.8 $\pm$0.7  & 92.8 $\pm$0.2  & 91.4 $\pm$1.0  & 85.9 $\pm$0.6 & 72.1 $\pm$2.5  &  50.3 $\pm$0.6 & 46.5 $\pm$4.3  \\
SCER        & 92.1 $\pm$0.2& \textbf{91.2 $\pm$0.2}& 92.7 $\pm$0.2& \textbf{91.4 $\pm$0.1}& 91.6 $\pm$0.3& \textbf{86.7 $\pm$0.8}& 74.1 $\pm$0.1& \textbf{73.6\ $\pm$0.2}\\
\bottomrule
\end{tabular} 
}
\vspace{-3pt}
\label{tab:main-results_paper}
\vspace{-15pt}
\end{table*}
\section{Experiment}
    \vspace{-4pt}
\label{sec:Experiment}
\subsection{Experiment Setting}
\label{sec:Experiment_setting}
    \vspace{-2pt}
\paragraph{Datasets} 
We evaluate on both real-world and synthetic datasets. For vision tasks, we use Waterbirds~\citep{sagawadistributionally}, CelebA~\citep{liu2015deep}, MetaShift~\citep{liang2022metashift}, and ColorMNIST~\citep{arjovsky2019invariant}. For language tasks, we use CivilComments~\citep{borkan2019nuanced} and MultiNLI~\citep{williams2018broad}. These datasets are widely adopted benchmarks for evaluating robustness to spurious correlations. Detailed descriptions and statistics are in Appendix~\ref{sec:app_data}.
\vspace{-7pt}

\paragraph{Models}
Following prior work~\citep{gulrajanisearch, izmailov2022feature}, we use pretrained ResNet-50~\citep{he2016deep} for image datasets and pretrained BERT~\citep{idrissi2022simple} for text datasets. We use SGD with momentum~\citep{polyak1964some} for images and AdamW~\citep{loshchilov2017decoupled} for text, following standard protocols~\citep{yang2023change}. Training steps are 5,000 for Waterbirds, MetaShift, and ColorMNIST, and 30,000 for CelebA, CivilComments, and MultiNLI. 
\vspace{-7pt}

\paragraph{Baselines}

We evaluate our method against various bias mitigation and imbalanced learning algorithms. Our comparison includes ERM; subpopulation robustness methods such as GroupDRO~\citep{sagawadistributionally}, LISA~\citep{yao2022improving}, PDE~\citep{deng2023robust}, and ElRep~\citep{wen2025elastic}; and class imbalance techniques such as ReSample~\citep{japkowicz2000class}. For ColorMNIST, we evaluate only methods that achieve worst-group accuracy above 70\%. We conduct experiments on a broader set of baselines, with detailed descriptions and complete results provided in Appendix~\ref{sec:app_base} and Tables~\ref{tab:main-results}, \ref{tab:CMNIST}, \ref{tab:CMNIST_r80}, and \ref{tab:text-results}.  We also conduct a focused comparison against GroupDRO-ES~\citep{izmailov2022feature}, the most competitive GroupDRO variant with early stopping, as presented in Table~\ref{tab:worst-acc-scer}.
\vspace{-7pt}

\paragraph{Evaluation Metrics}
Following prior work, we use Worst-group Accuracy (Worst Acc) as the primary metric to assess performance on the most vulnerable minority group, and Average Accuracy (Avg Acc) for overall performance. Our objective is to minimize the gap between these metrics.
\vspace{-5pt}

\section{Results}
    \vspace{-4pt}
\label{sec:Results}
\subsection{Quantitative Analysis}
\label{sec:quantitative}

\paragraph{Image Dataset}
Table~\ref{tab:main-results_paper} shows that SCER outperforms baseline methods across four benchmarks: Waterbirds, CelebA, MetaShift, and ColorMNIST, each presenting different spurious correlation challenges. SCER achieves the highest worst-group accuracy in four datasets, with 91.2\% on Waterbirds, 91.4\% on CelebA, 86.7\% on MetaShift, and 73.6\% on ColorMNIST at $\rho=80\%$. These results demonstrate that embedding-level disentanglement effectively mitigates spurious bias.
\vspace{-10pt}

\paragraph{Image Dataset with Stronger Spurious Correlations}  Table~\ref{tab:CMNIST_paper} reports performance on ColorMNIST under varying levels of spurious correlation $\rho$. As $\rho$ increases, the dataset becomes increasingly biased, making classification more challenging and highlighting the detrimental impact of spurious correlations on standard models. SCER consistently achieves the highest worst-group accuracy across all values of $\rho$, attaining 73.6\% at $\rho = 80\%$, 73.0\% at $\rho = 90\%$, 72.8\% at $\rho = 95\%$, and  56.0\% at $\rho = 99\%$, outperforming all baselines. 
\vspace{5pt}

\begin{table*}[t]
\centering
\caption{Performance comparison on ColorMNIST under increasing spurious correlation levels. SCER achieves the highest worst-group accuracy across different spurious correlation levels, demonstrating robustness to spurious feature reliance in ColorMNIST.}
\vspace{-5pt}
\resizebox{1\textwidth}{!}{%
\begin{tabular}{lcccccccc}
\toprule
\multicolumn{9}{c}{\textbf{ColorMNIST}} \\
\midrule
& \multicolumn{2}{c}{$\rho = 80\%$} & 
  \multicolumn{2}{c}{$\rho = 90\%$} & 
  \multicolumn{2}{c}{$\rho = 95\%$} &
  \multicolumn{2}{c}{$\rho = 99\%$} \\
\textbf{Algorithm} & \textbf{Avg Acc} & \textbf{Worst Acc} 
& \textbf{Avg Acc} & \textbf{Worst Acc}
& \textbf{Avg Acc} & \textbf{Worst Acc}
& \textbf{Avg Acc} & \textbf{Worst Acc} \\
\midrule
ERM                 & 38.2 $\pm$2.4 & 30.9 $\pm$2.7 & 31.9 $\pm$7.4 & 7.8 $\pm$3.4 & 41.1 $\pm$13.0& 10.1 $\pm$4.5 & 50.7 $\pm$8.2& 8.5 $\pm$5.0\\
GroupDRO            & 73.5 $\pm$0.3 & 73.1 $\pm$0.1 & 73.5 $\pm$0.0 & 72.7 $\pm$0.3 & 72.3 $\pm$ 0.1& 70.7 $\pm$0.2 & 50.3 $\pm$3.9& 38.7 $\pm$0.8\\
LISA                & 73.8 $\pm$0.1 & 73.2 $\pm$0.1 & 73.3 $\pm$0.1 & 72.9 $\pm$0.2 & 72.6 $\pm$0.4 & 71.4 $\pm$0.6 & 53.1 $\pm$7.2& 10.2 $\pm$8.1\\
ReSample            & 73.7 $\pm$0.1 & 72.2 $\pm$0.3 & 72.6 $\pm$0.5 & 70.8 $\pm$1.6 & 71.8 $\pm$0.1& 70.7 $\pm$0.5 & 57.1 $\pm$3.0& 45.4 $\pm$10.0\\
PDE                 & 76.6 $\pm$0.8 & 72.9 $\pm$0.3 & 72.6 $\pm$0.2 & 70.2 $\pm$0.7 & 73.4 $\pm$0.4& 70.0 $\pm$0.3 & 56.3 $\pm$4.3& 47.5 $\pm$8.9\\
SCER                & 74.1 $\pm$0.1 & \textbf{73.6 $\pm$0.2} & 
                      73.6 $\pm$0.1 & \textbf{73.0 $\pm$0.1} & 
                      73.5 $\pm$0.3 & \textbf{72.8 $\pm$0.3} & 
                      61.0 $\pm$2.0& \textbf{56.0 $\pm$2.2}\\
\bottomrule
\end{tabular}
}
\label{tab:CMNIST_paper}
\vspace{-18pt}
\end{table*}

\begin{minipage}{0.47\textwidth}
\centering
%\vspace{-\baselineskip}
\captionof{table}{Performance on ColorMNIST with one group absent. SCER achieves the highest worst-group accuracy.}
\vspace{-\baselineskip}
\vspace{5pt}
\resizebox{\textwidth}{!}{%
\begin{tabular}{lcc}
\toprule
\multicolumn{3}{c}{\textbf{ColorMNIST} (One Subpopulation Omitted)} \\
\midrule
\textbf{Algorithm} & \textbf{Avg Acc} & \textbf{Worst Acc} \\
\midrule
ERM       & 52.9 $\pm$8.4&  9.7 $\pm$5.4\\
GroupDRO   & 53.6 $\pm$2.5& 44.1 $\pm$6.2\\
LISA       & 49.5 $\pm$2.4& 13.5 $\pm$11.0\\
ReSample   &  49.1 $\pm$0.8& 16.3  $\pm$13.0\\
PDE        & 71.0 $\pm$12.6& 8.3 $\pm$13.9 \\
SCER        & 65.3 $\pm$1.0& \textbf{59.6 $\pm$1.0}\\
\bottomrule
\end{tabular}}
\label{tab:CMNIST_r80_paper}
\end{minipage}%
\hfill
%\vspace{-\baselineskip}
\begin{minipage}{0.50\textwidth}
%\vspace{-\baselineskip}
In addition to the standard evaluation settings in Table~\ref{tab:CMNIST_paper}, we examine a more challenging scenario in which a minority group is completely absent during training, creating a label color distribution of 50-0-10-40, as shown in Table~\ref{tab:dataset-stats}. This extreme setting tests algorithms' ability to generalize under harsh spurious correlations when facing complete group absence.
As shown in Table~\ref{tab:CMNIST_r80_paper}, SCER achieves 59.6\% worst-group accuracy in this extreme setting, significantly outperforming all existing methods. This result reveals limitations of existing approaches under extreme spurious correlations.
\end{minipage}
\vspace{3pt}

In this extreme setting where an entire subpopulation is omitted from training, GroupDRO~\citep{sagawadistributionally} suffers from instability with extreme minority groups, poor extrapolation to unseen subpopulations, and regularization-capacity trade-offs that limit its effectiveness. Class imbalance methods~\citep{japkowicz2000class} and LISA~\citep{yao2022improving} fail as they rely on reweighting available samples or interpolating between seen domains, making them unable to handle completely missing groups. Similarly, PDE~\citep{deng2023robust}, despite initially learning from balanced data, cannot generalize to entirely unseen group combinations. These results show embedding-level interventions are essential when complete groups are absent, as this causes extreme spurious correlations. SCER's superior performance stems from directly regularizing the representation space, mitigating spurious correlations and enabling robust generalization to unseen subpopulations.
\vspace{17pt}

\begin{minipage}{0.38\textwidth}
\centering
\vspace{-\baselineskip}
\captionof{table}{Performance comparison without explicit bias labels.}
\vspace{-\baselineskip}
\vspace{2pt}
\resizebox{\textwidth}{!}{%
\begin{tabular}{lc}
\toprule
\multicolumn{2}{c}{\textbf{ColorMNIST} (Two train envs)} \\
Method & Avg Acc \\
\midrule
%ERM & 13.8 $\pm$0.6 \\
%IRM & 65.5 $\pm$2.3 \\
%EIIL (IRM) & 68.4 $\pm$2.7 \\
%EIIL + DRO & 55.2 $\pm$1.0 \\
%\textbf{EIIL + SCER} & \textbf{69.9 $\pm$0.7} \\
IRM & 54.8 $\pm$8.3\\
EIIL (IRM) &  58.5 $\pm$1.8 \\
EIIL + DRO &  68.2 $\pm$7.0 \\
EIIL + SCER & \textbf{72.6 $\pm$5.5} \\
\bottomrule
\end{tabular}}
\label{tab:eiil_results}
\end{minipage}%
\hfill
\begin{minipage}{0.58\textwidth}
\vspace{-\baselineskip}
%\vspace{10pt}
\paragraph{Integration with Environment Inference Methods.}
A key advantage of SCER is its modular design, which enables seamless integration with existing frameworks without requiring explicit bias labels. We integrate SCER with the Environment Inference for Invariant Learning (EIIL) framework~\citep{creager2021environment}, replacing EIIL's second-stage IRM objective with SCER. Table~\ref{tab:eiil_results} shows experimental results integrating EIIL with SCER where no environment information is provided, simulating realistic scenarios without environment labels. 
\end{minipage}
%\vspace{-3pt}

We adopt the data setup from~\citep{creager2021environment}, with two environments differing in the strength of spurious correlations, as detailed in Table~\ref{tab:dataset-stats}. The environments inferred by EIIL achieve over 95\% agreement with actual spurious groups in ColorMNIST, providing reliable pseudo-labels for SCER training. As shown in Table~\ref{tab:eiil_results}, SCER maintains its effectiveness even with inferred environments, achieving the highest test accuracy of 72.6\%. Notably, while GroupDRO suffers significant performance degradation under environmental misalignment, SCER remains robust due to its embedding-level regularization. This demonstrates SCER's practical applicability and adaptability to real-world scenarios where explicit bias annotations are unavailable. Even in scenarios where environment labels are increasingly noisy, SCER continues to demonstrate robust performance, as detailed in Table~\ref{tab:eiil_results_70}.
%\vspace{-5pt}

\paragraph{Text Dataset} Table~\ref{tab:text-results_paper} presents SCER's performance on text-based datasets. SCER consistently achieves the best results on both CivilComments for multi-domain evaluation and MultiNLI for multi-class evaluation.
\vspace{13pt}

\begin{minipage}{0.33\textwidth}
\vspace{-\baselineskip}
On CivilComments, SCER achieves the highest worst-group accuracy of 74.0\%, effectively mitigating performance degradation in minority subpopulations. On MultiNLI, SCER achieves the highest worst-group accuracy of 76.8\%, demonstrating balanced learning across different class distributions. These results confirm that SCER's embedding regularization approach generalizes effectively beyond vision to text-based scenarios.
\end{minipage}%
\hfill
\begin{minipage}{0.645\textwidth}
\centering
\vspace{-\baselineskip}
\captionof{table}{Performance comparison on text datasets. $^{\ddagger}$ from \citet{deng2023robust} for Civilcomments. $^{\dagger\dagger}$ from \citet{wen2025elastic} for Civilcomments. SCER achieves the best worst-group accuracy across multi-class and domain settings.}
%\vspace{-\baselineskip}
\vspace{-4.5pt}
\resizebox{\textwidth}{!}{%
\begin{tabular}{lcccc}
\toprule
\textbf{Algorithm} & \multicolumn{2}{c}{\textbf{CivilComments}} & \multicolumn{2}{c}{\textbf{MultiNLI}} \\
& \textbf{Avg Acc} & \textbf{Worst Acc} & \textbf{Avg Acc} & \textbf{Worst Acc} \\
\midrule
ERM$^{\dagger}$           & 85.4 $\pm$0.2 & 63.7 $\pm$1.1 & 80.9 $\pm$0.1 & 66.8 $\pm$0.5 \\
GroupDRO$^{\dagger}$      & 81.8 $\pm$0.6 & 70.6 $\pm$1.2 & 81.1 $\pm$0.3 & 76.0 $\pm$0.7 \\
LISA$^{\dagger}$          & 82.7 $\pm$0.1 & 73.7 $\pm$0.3 & 80.3 $\pm$0.4 & 73.3 $\pm$1.0 \\
ReSample$^{\dagger}$      & 82.2 $\pm$0.0 & 73.3 $\pm$0.5 & 77.2 $\pm$0.2 & 72.3 $\pm$0.8 \\
PDE$^{\ddagger}$ & 86.3 $\pm$1.7 & 71.5 $\pm$0.5  & 69.1 $\pm$0.3  & 65.8 $\pm$0.6 \\
ElRep$^{\dagger\dagger}$ & 79.0 $\pm$0.7 & 70.5 $\pm$0.5  & 69.0 $\pm$3.8 & 66.8 $\pm$5.2 \\
SCER          &  81.7
 $\pm$0.4 
&  \textbf{74.0
 $\pm$1.0}&   80.4  $\pm$0.1&  \textbf{76.8 $\pm$0.5}\\
\bottomrule
\end{tabular}}
%\vspace{0.3em}
\label{tab:text-results_paper}
\end{minipage}

SCER demonstrates robust performance across multi-domain and multi-class environments, highlighting its versatility in handling subpopulation shifts across modalities.
\vspace{7pt}

Our quantitative analysis shows that SCER consistently surpasses existing methods across diverse image and text settings, maintaining strong worst-group performance even under severe spurious correlations, missing groups, and inferred environments. This demonstrates that embedding-level regularization effectively reduces spurious bias and supports robust generalization across domains.
\vspace{-2pt}

\subsection{Qualitative Analysis}
\vspace{-5pt}
\label{sec:qualitative}
%We conduct qualitative analysis to understand SCER's effectiveness in suppressing spurious cues and focusing on core features.
\begin{table}[htbp]
\centering
\caption{Sensitivity analysis of embedding regularization on ColorMNIST. 
Worst-group accuracy remains stable across settings, with the best results observed when both loss terms are combined.}
\vspace{-5pt}
\resizebox{\textwidth}{!}{%
\begin{tabular}{lcc|lcc|lcc}
\toprule
\multicolumn{9}{c}{\textbf{ColorMNIST} ($\rho=95\%$)} \\ 
\midrule
\multicolumn{3}{c|}{both $\lambda=0$} &
\multicolumn{3}{c|}{$\lambda_{\text{core}}$ (fixed $\lambda_{\text{spur}}$)} &
\multicolumn{3}{c}{$\lambda_{\text{spur}}$ (fixed $\lambda_{\text{core}}$)} \\
\cmidrule(lr){1-3} \cmidrule(lr){4-6} \cmidrule(lr){7-9}
Setting & Avg Acc & Worst Acc &
$\lambda_{\text{core}}$ & Avg Acc & Worst Acc &
$\lambda_{\text{spur}}$ & Avg Acc & Worst Acc \\
\midrule
0   & 72.3 $\pm$0.1 & 70.7 $\pm$0.2 &
0.0 & 72.8 $\pm$0.4 & 71.6 $\pm$0.6 &
0.0 & 72.8 $\pm$0.3 & 71.6 $\pm$0.4 \\
--  & --            & --            &
0.5 & 73.0 $\pm$0.2 & \textbf{72.2 $\pm$0.4} &
0.5 & 72.3 $\pm$0.4 & 71.6 $\pm$0.3 \\
--  & --            & --            &
1.0 & 73.3 $\pm$0.2 & 72.0 $\pm$0.2 &
1.0 & 73.5 $\pm$0.3 & \textbf{72.8 $\pm$0.3} \\
\bottomrule
\end{tabular}}
\vspace{-10pt}
\label{tab:lambda_sweep}
\end{table}

\paragraph{Embedding Regularization Analysis}
We conduct sensitivity analysis on ColorMNIST with $\rho=95\%$ to evaluate our proposed loss components. As shown in Table~\ref{tab:lambda_sweep}, we systematically vary $\lambda_{\text{core}}$ and $\lambda_{\text{spur}}$ according to Equation~\ref{eq:embedding-loss}. Results show that each regularization component independently improves worst-group accuracy over the baseline, demonstrating that both $\mathcal{L}_{\text{core}}$ and $\mathcal{L}_{\text{spur}}$ contribute to robustness against spurious correlations. Notably, their joint optimization achieves the highest worst-group performance, confirming that the two loss terms produce complementary effects. This validates our theoretical insight that achieving optimal worst-group generalization requires simultaneously maximizing alignment with core features while minimizing reliance on spurious features. To further assess parameter sensitivity in a large-scale setting, we provide additional analysis on CelebA in Table~\ref{tab:celeba_lambda_sweep}.
\vspace{-5pt}
\begin{figure}[h]
\centering
\includegraphics[width=1\linewidth]{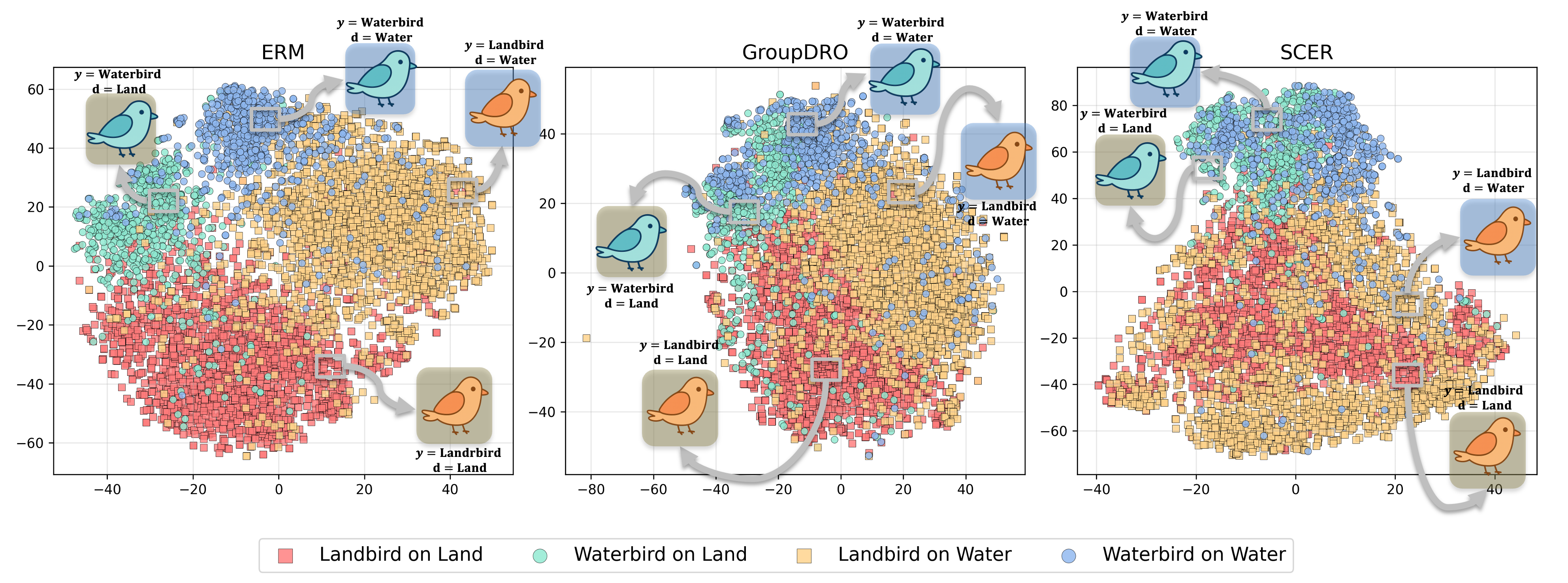}
\vspace{-5pt}
\caption{t-SNE visualization on the Waterbirds dataset. ERM clusters samples primarily by background rather than label. GroupDRO partially mitigates this but still exhibits background-based separation within each label. SCER produces label-aligned, background-invariant embeddings, effectively suppressing spurious correlations.}
\label{fig:fig_emb_waterbirds}
\vspace{-10pt}
\end{figure}

\paragraph{Embedding Space Analysis} Figure~\ref{fig:fig_emb_waterbirds} shows t-SNE projections of embeddings learned by the three models on the Waterbirds dataset. ERM exhibits strong domain-based clustering, with samples separating predominantly by background rather than label. Samples from different backgrounds form distinct clusters, indicating heavy reliance on spurious features rather than core class characteristics. GroupDRO shows reduced but still visible clustering by background within the same label, demonstrating incomplete suppression of spurious correlations.

In contrast, SCER produces label-aligned embeddings where samples cluster primarily by label regardless of background. Waterbird samples from different backgrounds intermix to form a single unified cluster, while Landbird samples from different backgrounds aggregate similarly, maintaining clear separation between the two classes. This domain-invariant structure provides qualitative evidence that SCER effectively suppresses spurious correlations via embedding-level regularization, resulting in representations that generalize robustly across distribution shifts.
\vspace{15pt}

\begin{minipage}{0.45\textwidth} % ← minipage 시작하자마자 추가
\centering
\vspace{-\baselineskip}
\captionof{table}{Comparison between Euclidean norm and $\Sigma$-norm on ColorMNIST. 
 Our proposed $\Sigma$-norm consistently achieves higher worst-group accuracy.}
\vspace{-\baselineskip}
\vspace{5pt}
\resizebox{\textwidth}{!}{%
\begin{tabular}{lcc}
\toprule
\multicolumn{3}{c}{\textbf{ColorMNIST ($\rho = 95\%$)}} \\
\midrule
Norm Type & Avg Acc & Worst Acc \\
\midrule
Euclidean norm      & 72.7 $\pm$0.8 & 70.0 $\pm$0.4 \\
$\Sigma$-norm (Ours) & 73.5 $\pm$0.3 & \textbf{72.8 $\pm$0.3 }\\
\bottomrule
\end{tabular}}% ← 캡션-본문 간격 조절
\label{tab:sigma-norm}
\end{minipage}%
\hfill
\begin{minipage}{0.51\textwidth}
\vspace{-\baselineskip}

\paragraph{Component Analysis}
We investigate our proposed $\Sigma$-norm by replacing the standard Euclidean norm in Equation~\ref{eq:correlation-metric}. 
As shown in Table~\ref{tab:sigma-norm}, the $\Sigma$-norm outperforms the Euclidean baseline. 
This improvement comes from the natural benefits of the distance, which captures the structure of covariance of the features and ensures the invariance of the scale in high-dimensional spaces~\citep{ghorbani2019mahalanobis,mahalanobis2018generalized}.
\end{minipage}
%\vspace{-4pt}
\begin{figure}[h]
\centering
\includegraphics[width=1\linewidth]{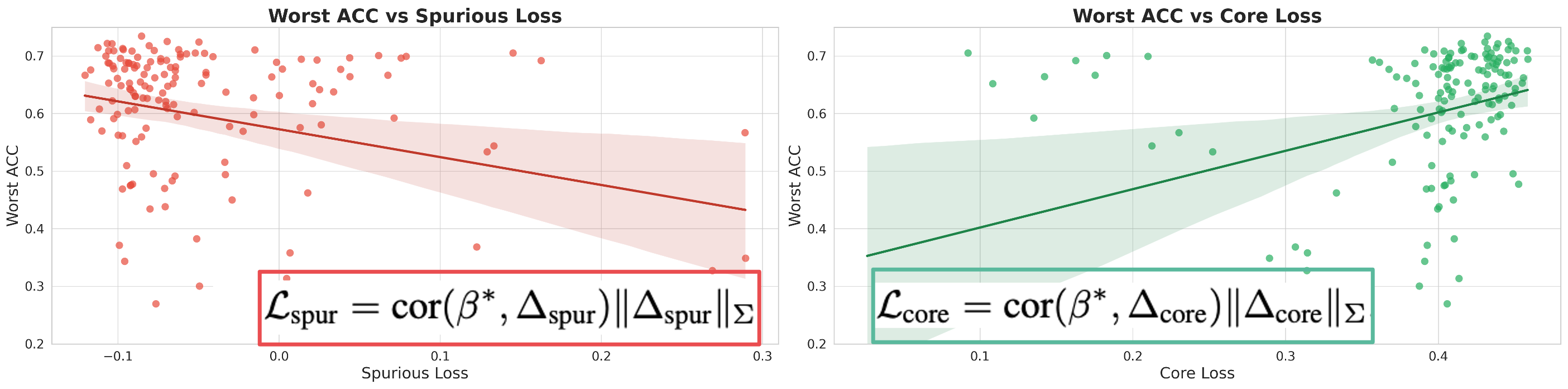}
\vspace{-5pt}
\caption{Scatter plots showing worst-group accuracy vs spurious and core metrics. Spurious Loss shows negative correlation, while Core Loss shows positive correlation.}
\vspace{-7pt}
\label{fig:fig_final}
\end{figure}

\paragraph{Correlation Analysis between Objective Function Decomposition and worst-group accuracy}

Figure~\ref{fig:fig_final} demonstrates the relationship between worst-group accuracy and spurious/core metrics, revealing insights into group-invariant learning. Spurious Loss exhibits negative correlations with Worst Acc, indicating that a stronger dependence of the classifier on domain-specific characteristics degrades the performance of the minority group, while Core Loss shows positive correlations, demonstrating that alignment with genuine label discriminatory features enhances the generalization of the group.

Specifically, Weight-Spurious Alignment $\text{cor}(\beta^*, \Delta_{\text{spur}})$ measures how much classifier weights align with intra-class domain variations, quantifying spurious feature dependence. In contrast, Weight-Core Alignment $\text{cor}(\beta^*, \Delta_{\text{core}})$ captures how well the classifier aligns with domain-consistent discriminative directions. Spurious Magnitude $\|\Delta_{\text{spur}}\|_\Sigma$ and Core Magnitude $\|\Delta_{\text{core}}\|_\Sigma$ quantify inter-domain distributional differences and inter-label separability, respectively. The consistent patterns observed across these metrics align with our theoretical decomposition, confirming that the regularization terms directly address the properties essential for worst-group robustness. These results establish that balancing core signal capture and spurious correlation suppression is fundamental to group invariance.
\vspace{-7pt}

\begin{figure}[h]
\centering
\includegraphics[width=\linewidth]{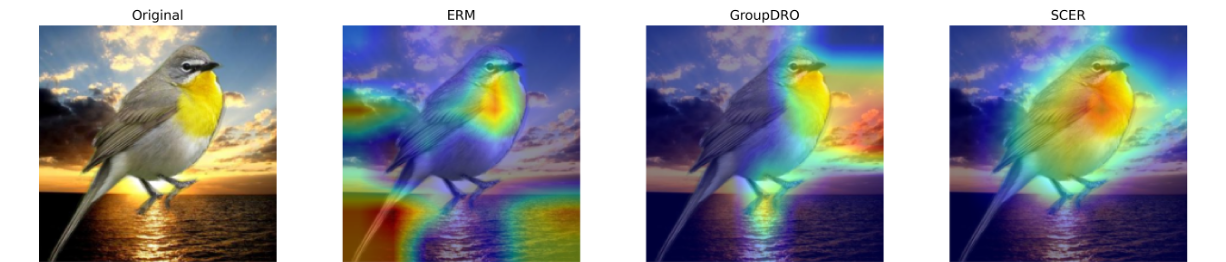}
\caption{Grad-CAM visualization comparison on the Waterbirds. ERM heavily relies on background cues, and GroupDRO only partially captures the object, whereas SCER consistently attends to bird-specific features. This demonstrates SCER's ability to suppress spurious background correlations while focusing on label-relevant structures.}
\label{fig:fig_grad_paper}
\vspace{-10pt}
\end{figure}
%\vspace{-35pt}

\paragraph{Grad-CAM} Figure~\ref{fig:fig_grad_paper} presents a comparative analysis of Grad-CAM~\citep{selvaraju2017grad} visualizations for three methods applied to bird classification models. ERM exhibits dispersed activation patterns across the entire image, allocating considerable attention to background regions. GroupDRO shows concentrated activation on some regions of the bird, but does not fully utilize all parts that can be considered core features for classification. In contrast, SCER concentrates strong activation across structural features such as the bird's body, head, wings, and tail, while effectively suppressing dependence on background or spurious elements. These results visually demonstrate that our mean embedding-based penalty approach comprehensively captures diverse discriminative features critical for classification while effectively suppressing spurious factors. Additional examples are provided in Figure~\ref{fig:fig_gradcam_dro}.
\vspace{5pt}

Our qualitative analysis demonstrates that SCER effectively suppresses spurious features and learns representations focused on core signals, as evidenced by embedding structures, attention visualizations, and interpretability analyses that support robust group-invariant behavior.
%\vspace{-15pt}
\section{Conclusion}
  \vspace{-5pt}
\label{sec:Conclusion}
We propose Spurious Correlation-Aware Embedding Regularization (SCER), which addresses spurious correlations by decomposing worst-group error into spurious and core components at the embedding level. By suppressing alignment with spurious feature directions while promoting consistency with core feature directions, the model is prevented from overfitting to domain-specific artifacts. Through extensive experiments across image and text domains, we confirm that SCER consistently achieves superior worst-group accuracy, demonstrating that explicit embedding-level regularization addresses subgroup bias more directly than sample reweighting or feature alignment approaches. Furthermore, SCER's modular design enables integration with environment-inference methods, making it practically applicable to real-world scenarios without requiring explicit bias labels. This work presents a promising direction for robust learning under distribution shift and is expected to contribute to practical applications where fairness is critical.
\vspace{-20pt}
\section*{Acknowledgments}
This work was supported by the National Research Foundation of Korea(NRF) grant funded by the Korea government(MSIT)(RS-2024-00457216).

\bibliography{iclr2026_conference}
\bibliographystyle{iclr2026_conference}

\newpage
\appendix
\clearpage
\setcounter{page}{1}
\maketitle
\appendix

% 용어 통일 하잣 !!!
\section{Methodology}
\label{app_Methodology} 

This section provides comprehensive algorithmic details and theoretical proofs for the SCER framework presented in the main paper. We first present detailed step-by-step procedures for SCER implementation, including covariance computation, embedding regularization, and loss optimization, as summarized in the main text. Subsequently, we provide formal proofs for the theoretical foundations underlying our embedding regularization approach, including the decomposition of the embedding space into core and spurious components and the optimality conditions for robustness.

\subsection{Overall process of SCER}
\label{app_algo}

Recall from Section~\ref{sec:Methodology} that we define the Spurious and Core Loss as:
\begin{align*}
    \mathcal{L}_{\text{spur}} &= \text{cor}(\beta^*, \Delta_{\text{spur}}) \|\Delta_{\text{spur}}\|_\Sigma, \quad
    \mathcal{L}_{\text{core}} = \text{cor}(\beta^*, \Delta_{\text{core}}) \|\Delta_{\text{core}}\|_\Sigma.
\end{align*}
The \textbf{Spurious Loss} $\mathcal{L}_{\text{spur}}$ penalizes alignment with spurious directions, scaled by the magnitude of spurious variation in the data. The \textbf{Core Loss} $\mathcal{L}_{\text{core}}$ encourages alignment with core directions, with the negative sign ensuring that more substantial core alignment reduces the overall loss.
Our loss formulation penalizes alignment with spurious directions and encourages alignment with core directions. By introducing control parameters $\lambda_{\text{spur}}$ and $\lambda_{\text{core}}$, we derive our embedding loss function:
\begin{equation*}
    \mathcal{L}_{\text{embedding}} = \lambda_{\text{spur}} \mathcal{L}_{\text{spur}} - \lambda_{\text{core}} \mathcal{L}_{\text{core}}.
\end{equation*}

The final SCER objective combines embedding loss with worst-group classification loss $\mathcal{L}_{\text{wge}}$:
\begin{equation*}
\mathcal{L}_{\text{total}} = \mathcal{L}_{\text{wge}} + \mathcal{L}_{\text{embedding}}.
\label{equation_appendix}
\end{equation*}

This loss formulation mitigates overfitting to domain-specific biases by reducing intra-class variation across domains while preserving inter-class separability. SCER promotes robust decision boundaries, enabling strong worst-group performance under distribution shifts.

\noindent

\begin{minipage}[t]{0.44\textwidth}
\vspace{-\baselineskip}
\vspace{10pt}
As shown in Equation~\ref{equ_final}, the total training loss combines two complementary components. The first term, $\mathcal{L}_{\text{wge}}$, is the worst-group classification loss derived from GroupDRO, which addresses subpopulation imbalances through robust optimization (\textbf{Worst-Group Error-based Classification}). The second term, $\mathcal{L}_{\text{embedding}}$, is an embedding-level regularization that explicitly controls the alignment between classifier weights and Spurious/Core Directions in the feature space (\textbf{Embedding-based Regularization}). This regularization penalizes the classifier's reliance on domain-specific artifacts while encouraging alignment with label-consistent features. The two components work synergistically: the classification loss ensures robust performance across subpopulations, while the embedding regularization structures the representation space for better generalization. The combined objective is optimized via gradient-based methods (\textbf{Loss Optimization}). The details of each step are illustrated in Algorithm~\ref{alg:SCER}.
\end{minipage}
\hfill
\begin{minipage}[t]{0.53\textwidth}
\centering
\small
\vspace{-\baselineskip}
\begin{algorithm}[H]
\caption{SCER Training Process}
\label{alg:SCER}
\begin{algorithmic}
\REQUIRE Data $(x, y,d)$, parameters $(w, \beta)$, hyperparams $(\eta, \lambda_{\text{spur}}, \lambda_{\text{core}})$
\STATE Initialize $\mathbf{q} \leftarrow \mathbf{1}$, $\Sigma \leftarrow \mathbf{I}$
\FOR{each step}
    \STATE $x_{emb} \leftarrow f_w(x)$
    
    \STATE \textbf{(1) Worst-Group Error based Classification}
    \STATE $\mathcal{L}(y, f_{\beta}(x_{\text{emb}}))$, \\
    $q_{(y,d)} \leftarrow q_{(y,d)} \exp(\eta \mathbb{E}[\mathcal{L}_{(y,d)}])$
    \STATE $q \leftarrow q / \sum_{(y,d) \in \mathcal{Y} \times \mathcal{D}} q_{(y,d)}$ 
    \STATE $\mathcal{L}_{\text{wge}} = \sum_{(y,d) \in \mathcal{Y} \times \mathcal{D}}  q_{(y,d)} \mathbb{E}[\mathcal{L}_{(y,d)}]$
    
\STATE \textbf{(2) Embedding-based Regularization}
\STATE $\Sigma = \frac{1}{N} \sum_{i} (x_{emb,i} - \bar{x}_{emb})(x_{emb,i} - \bar{x}_{emb})^T$
\STATE $\beta^\star \leftarrow \text{current classifier weights}$

\STATE \scalebox{0.85}{$
\text{cor}(\beta^*, \Delta_{\text{spur}}) = \frac{1}{m} \sum_{j=1}^m \frac{\langle \beta^*_j, \Delta_{\text{spur}} \rangle}{\|\beta^*_j\|_\Sigma \cdot \|\Delta_{\text{spur}}\|_\Sigma}
$}

\STATE \scalebox{0.85}{$
\text{cor}(\beta^*, \Delta_{\text{core}}) = \frac{1}{m} \sum_{j=1}^m \frac{\langle \beta^*_j, \Delta_{\text{core}} \rangle}{\|\beta^*_j\|_\Sigma \cdot \|\Delta_{\text{core}}\|_\Sigma}
$}

\STATE $\mathcal{L}_{\text{spur}} = \text{cor}(\beta^*, \Delta_{\text{spur}}) \|\Delta_{\text{spur}}\|_\Sigma$

\STATE $\mathcal{L}_{\text{core}} = \text{cor}(\beta^*, \Delta_{\text{core}}) \|\Delta_{\text{core}}\|_\Sigma$

\STATE $\mathcal{L}_{\text{embedding}} = \lambda_{\text{spur}} \mathcal{L}_{\text{spur}} - \lambda_{\text{core}} \mathcal{L}_{\text{core}}$

    \STATE \textbf{(3) Optimize}
    \STATE $\mathcal{L}_{\text{total}} = \mathcal{L}_{\text{wge}} + \mathcal{L}_{\text{embedding}}$

    \STATE Update $(w, \beta)$ via gradient descent
\ENDFOR
\end{algorithmic}
\end{algorithm}
\end{minipage}
%\vspace{5pt}

 %In practice, we utilize the current classifier weights $\beta^*_j$ for each class $j$, which are directly extracted from the linear classifier layer. This practical approximation eliminates the need for computationally expensive matrix inversion operations while seamlessly integrating with standard gradient-based optimization procedures.

\paragraph{Worst-Group Error-based Classification.}
The feature vector $x_{\text{emb}} \in \mathbb{R}^p$, where $p$ is the embedding dimension, is used to compute classification losses for each group by comparing predictions with ground-truth labels. These group-wise losses are aggregated, and group-specific weights are updated using an exponentiated gradient scheme. This reweighting emphasizes underperforming groups, helping to correct distributional imbalances and enhancing robustness to worst-group errors.
\paragraph{Embedding-based Regularization.}
This step regularizes the learned representation by explicitly distinguishing spurious and core directions in the embedding space and aligning the decision boundary accordingly. To capture group-wise variability, a covariance matrix $\Sigma \in \mathbb{R}^{p \times p}$ is computed from the feature embeddings $x_{\text{emb}}$. The core and spurious directions, denoted by $\Delta_{\text{core}} \in \mathbb{R}^p$ and $\Delta_{\text{spur}} \in \mathbb{R}^p$ respectively, are derived from the groupwise mean embeddings. To assess the classifier's alignment with these directions, we use the current classifier weights $\beta^*_j \in \mathbb{R}^p$ for each class $j$, which are dynamically updated during training via gradient descent. Using the weight matrix $\beta^*$, the spurious and core components—$\mathcal{L}_{\text{spur}}$ and $\mathcal{L}_{\text{core}}$—are computed. Their weighted combination forms the directional loss $\mathcal{L}_{\text{embedding}}$. Minimizing this objective reduces the classifier's reliance on spurious directions while encouraging alignment with core directions, guiding the representation toward meaningful semantic structure. This regularization promotes intra-class alignment across domains and enhances inter-class separation, thereby mitigating spurious correlations in the embedding space and reducing worst-group error.

\paragraph{Loss Optimization.}

The total loss is computed by combining the worst-group-error loss $\mathcal{L}_{\text{wge}}$ with the directional regularization loss $\mathcal{L}_{\text{embedding}}$. Minimizing this objective ensures that the classifier performs well across all groups while learning representations that are robust to spurious correlations. By aligning the embedding space with core directions and suppressing spurious ones, the method enhances robustness to worst-group errors under distribution shift.

\subsection{Theoretical Analysis of SCER}
\label{sec:theory}

% thm & proof (20250725) by JK

We now present a theoretical analysis that formally decomposes the worst-group error. Following prior work~\citep{yao2022improving}, we adopt a Gaussian mixture model to analyze how worst-group error arises from label-domain interactions.

In the main text, we decompose the classifier $f$ into a feature extractor $f_w$ and a linear classifier $f_\beta$, where the representation is given by the embedding
$x_{\mathrm{emb}} = f_w(x) \in \mathbb{R}^p$.
Throughout this subsection, we conduct the analysis \emph{in the embedding space} and, with a slight abuse of notation, we overload the symbol $x$ to denote the embedding:
\[
x \equiv x_{\mathrm{emb}} 
\]
Consequently, all distributions of the form $x \mid (y,d)$ below are to be understood as distributions over embedding vectors.

\subsubsection{Binary Setting}
% Consider a classification problem where each data point \( x \in \mathcal{X} \) is associated with a label $\mathcal{Y} = \{y_{-1}, y_{+1}\}$ and a domain $\mathcal{D} = \{d_R, d_G\}$. 
Consider a classification problem where each embedding vector 
$x \in \mathbb{R}^p$ is associated with a label $\mathcal{Y} = \{y_{-1}, y_{+1}\}$ and a domain $\mathcal{D} = \{d_R, d_G\}$. Each pair $(y, d) \in \mathcal{Y} \times \mathcal{D}$ defines a \emph{subpopulation}. We assume that the data follows a group-conditional Gaussian distribution $x \mid (y,d) \sim \mathcal{N}(\mu^{(y,d)}, \Sigma)$ where $\Sigma$ is positive definite. We further assume that $\Delta_{\text{core}} = \mu^{(y_{+1}, d)} - \mu^{(y_{-1}, d)}$ is constant across $d$, $\Delta_{\text{spur}} = \mu^{(y, d_R)} - \mu^{(y, d_G)}$ is constant across $y$, $\Sigma^{(d_R)} = \Sigma^{(d_G)} = \Sigma$, $\mathbb{P}(y = \pm 1) = \mathbb{P}(d \in \{d_R,d_G\}) = \frac{1}{2}$, and $\mathbb{P}_{y,d} \approx 0$ for some pairs $(y,d)$ under extreme spurious correlations. %For logistic classifier $f_\beta = \text{sign}(\beta^{*\top} x)$ with $\beta^* \in \mathbb{R}^p$, the worst-group error can be decomposed as
\vspace{10pt}
\setcounter{proposition}{0}
\begin{proposition}[ERM solution under Cross-Entropy Loss]
Under this assumption, the ERM solution with cross-entropy loss satisfies
\[
\beta^* \propto \Sigma^{-1}\tilde{\Delta}, 
\qquad \tilde{\Delta}=\Delta_{\mathrm{core}}+\Delta_{\mathrm{spur}},
\]
\label{prop:erm-solution}
\end{proposition}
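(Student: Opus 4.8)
The plan is to show that the ERM minimizer of the population cross-entropy risk is, up to scaling, the Fisher linear discriminant direction $\Sigma^{-1}\tilde\Delta$ with $\tilde\Delta = \Delta_{\mathrm{core}} + \Delta_{\mathrm{spur}}$. First I would reduce the four-group mixture to an effective two-class Gaussian problem. Under the stated assumptions the label-conditional distribution $x\mid y$ is itself a mixture over the two domains; since $\mathbb{P}(d)=\tfrac12$ and the covariances coincide, and since $\Delta_{\mathrm{spur}}$ is constant across $y$, the two class-conditional mixtures $\mathbb{P}(x\mid y_{+1})$ and $\mathbb{P}(x\mid y_{-1})$ are translates of one another: their means differ by exactly $\Delta_{\mathrm{core}}$, while each is an equal mixture of a component shifted by $+\tfrac12\Delta_{\mathrm{spur}}$ and one shifted by $-\tfrac12\Delta_{\mathrm{spur}}$ (relative to the class mean). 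Under the extreme-spurious-correlation assumption $\mathbb{P}_{y,d}\approx 0$ for the two ``minority'' cells, the effective training distribution collapses to the two ``majority'' cells, say $(y_{+1},d_R)$ and $(y_{-1},d_G)$ (or the symmetric pair), so that the data the ERM actually sees are two Gaussians with common covariance $\Sigma$ and mean difference $\mu^{(y_{+1},d_R)} - \mu^{(y_{-1},d_G)} = \Delta_{\mathrm{core}} + \Delta_{\mathrm{spur}} = \tilde\Delta$.

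Next I would invoke the classical fact that logistic regression (linear classifier trained with cross-entropy) on two Gaussians sharing a covariance matrix $\Sigma$, with equal priors, has Bayes-optimal decision boundary normal to $\Sigma^{-1}(\mu_1-\mu_0)$; and since this linear-logistic model is well-specified for homoscedastic Gaussians, the population cross-entropy minimizer recovers exactly this direction. Concretely, write the population risk $R(\beta,b) = \mathbb{E}[\log(1+\exp(-(2y-1)(\beta^\top x + b)))]$ over the collapsed two-Gaussian distribution; the log-odds of the true posterior is affine in $x$ with slope $\Sigma^{-1}\tilde\Delta$, so plugging $\beta^* = \Sigma^{-1}\tilde\Delta$ (with the matching intercept) makes the predicted probabilities equal to the true posteriors, which is the unique minimizer of the expected cross-entropy by the Gibbs inequality (strict convexity of KL / properness of log loss). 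This gives $\beta^* \propto \Sigma^{-1}\tilde\Delta$. The sign/scale ambiguity is exactly the $\pm$ and proportionality noted in the Remark: cross-entropy on separable-in-the-limit or scaled data fixes only the direction, and the overall sign flips if we relabel which cell is ``majority.''

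The main obstacle — and the step requiring the most care — is justifying the collapse of the mixture to two components rigorously rather than heuristically, since $\mathbb{P}_{y,d}\approx 0$ is only an approximation. I would handle this by treating the exact risk as a perturbation: the gradient of the population cross-entropy is $\mathbb{E}[(\sigma(\beta^\top x+b) - \mathbb{1}\{y=+1\})\,x]$ summed over the four cells weighted by $\mathbb{P}_{y,d}$; the contributions of the two small cells are $O(\varepsilon)$ where $\varepsilon = \max_{\text{minority}}\mathbb{P}_{y,d}$, so the stationarity condition is an $O(\varepsilon)$ perturbation of the two-cell stationarity condition whose solution is $\Sigma^{-1}\tilde\Delta$. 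By strong convexity of the cross-entropy risk in a neighbourhood of the optimum (guaranteed by positive-definiteness of $\Sigma$ and boundedness of the Gaussian moments), the minimizer is continuous in $\varepsilon$, and in the limit $\varepsilon\to 0$ we recover $\beta^*\propto\Sigma^{-1}\tilde\Delta$ exactly. A secondary subtlety is verifying that within each class the domain-mixture structure does not perturb the discriminant direction — here the equal-weight, equal-covariance, $y$-independent-$\Delta_{\mathrm{spur}}$ assumptions are exactly what is needed, because they make the within-class mixture symmetric about the class mean, so its effect on the class mean is null and on the pooled second moment is a rank-one additive term that can be absorbed (or, in the fully-collapsed limit, simply disappears). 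I would state the result cleanly in the collapsed limit, consistent with how Proposition~\ref{prop:erm-solution} is phrased, and defer the $\varepsilon$-perturbation bookkeeping to a remark.
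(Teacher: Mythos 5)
Your proposal is correct and follows essentially the same route as the paper's proof: collapse the four-cell mixture to the two majority cells under the extreme spurious-correlation assumption, invoke Fisher consistency of logistic regression for homoscedastic Gaussians to identify the population cross-entropy minimizer with the Bayes direction $\Sigma^{-1}(\bar\mu_1-\bar\mu_0)$, and decompose the effective mean difference into $\Delta_{\mathrm{core}}+\Delta_{\mathrm{spur}}$. Your added $O(\varepsilon)$-perturbation argument for making the ``$\mathbb{P}_{y,d}\approx 0$'' collapse rigorous is a refinement the paper omits, but it does not change the underlying argument.
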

\vspace{-20pt}
\begin{proof}
The ERM problem using cross-entropy loss as a logistic regression surrogate is
\[
\min_{\beta,\beta_0}\; 
\mathbb{E}\!\left[\log\!\left(1+\exp(-y(\beta^\top x+\beta_0))\right)\right].
\]
The population minimizer $\beta^*$ satisfies the score equation
\[
\mathbb{E}\!\left[y\,x\,\sigma(-y(\beta^{*\top}x+\beta_0^*))\right]=0,
\qquad \sigma(t)=\tfrac{1}{1+e^{-t}}.
\]
Under the Gaussian assumption with shared covariance $\Sigma$, the Bayes classifier is derived from the log-likelihood ratio
\[
\log\frac{p(x\mid y_{+1})}{p(x\mid y_{-1})}
=(\mu^{(y_{+1})}-\mu^{(y_{-1})})^\top \Sigma^{-1}x - \tfrac{1}{2}(\mu^{(y_{+1})}+\mu^{(y_{-1})})^\top \Sigma^{-1}(\mu^{(y_{+1})}-\mu^{(y_{-1})}),
\]
which is linear in $x$ with a normal vector $\Sigma^{-1}(\mu^{(y_{+1})}-\mu^{(y_{-1})})$.
Since logistic regression is Fisher-consistent under correct model specification, the cross-entropy minimizer $\beta^*$ aligns with the Bayes direction:
\[
\beta^* \propto \Sigma^{-1}(\mu^{(y_{+1})}-\mu^{(y_{-1})}).
\]

In the extreme shift setting, the effective class means correspond to $\mu^{(y_{+1},d_R)}$ and $\mu^{(y_{-1},d_G)}$.  
Thus
\[
\beta^* \propto \Sigma^{-1}\big(\mu^{(y_{+1},d_R)}-\mu^{(y_{-1},d_G)}\big).
\]
Decomposing the difference:
\[
\mu^{(y_{+1},d_R)}-\mu^{(y_{-1},d_G)}
=(\mu^{(y_{+1},d_R)}-\mu^{(y_{-1},d_R)})+(\mu^{(y_{-1},d_R)}-\mu^{(y_{-1},d_G)})
=\Delta_{\mathrm{core}}+\Delta_{\mathrm{spur}}=\tilde{\Delta},
\]
we conclude
\[
\beta^* \propto \Sigma^{-1}\tilde{\Delta}.
\]
\end{proof}
\vspace{-20pt}

\paragraph{Remark}
\label{rmk:sharedcov}
Our theoretical analysis relies on modeling the embedding distribution
$x \in \mathbb{R}^p$ under each subpopulation $(y,d)$ using a Gaussian with a
shared covariance matrix~$\Sigma$. This assumption enables the tractable
LDA-style derivation of Proposition~\ref{prop:erm-solution} and
Theorem~\ref{thm:wge-appendix}, where the classifier direction admits the
closed-form expression $\beta^* \propto \Sigma^{-1}\tilde{\Delta}$.
Importantly, this assumption is empirically well-grounded. \citet{seddik2020random} show that deep embeddings exhibit a dominant shared covariance structure across groups, with group-specific differences appearing as higher-order perturbations. Furthermore, the neural collapse phenomenon indicates that within-class covariances converge toward one another as training progresses~\citep{papyan2020prevalence}.
\vspace{10pt}

\setcounter{theorem}{0}
\begin{theorem}[Worst-group error Decomposition]
Under this assumption, for classifier $f_\beta = \text{sign}(\beta^{*\top} x)$ with $\beta^* \in \mathbb{R}^p$, the worst-group error can be decomposed as

\[
E_{\mathrm{wge}}
=\Phi\!\left(
\pm \tfrac{1}{2}\,\cor(\beta^*,\Delta_{\mathrm{spur}})\,\|\Delta_{\mathrm{spur}}\|_\Sigma
-{\tfrac{1}{2}\,}\cor(\beta^*,\Delta_{\mathrm{core}})\,\|\Delta_{\mathrm{core}}\|_\Sigma
\right),
\]
%%where
%\[
%\cor(u,v):=\frac{u^\top \Sigma v}{\|u\|_\Sigma \|v\|_\Sigma},
%\qquad \|v\|_\Sigma:=\sqrt{v^\top \Sigma v}.
%\]
\label{thm:wge-appendix}
\end{theorem}

\begin{proof}
Consider the linear classifier
\[
f(x)=\mathrm{sign}(\beta^\top x+\beta_0).
\]
For subgroup $(y,d)$,
\[
\mathbb{P}(f(x)\neq y \mid (y,d))
=\Phi\!\left(-\frac{y(\beta^\top \mu^{(y,d)}+\beta_0)}{\|\beta\|_\Sigma}\right),
\qquad \|\beta\|_\Sigma=\sqrt{\beta^\top \Sigma \beta}.
\]

Using $\beta_0=-\beta^\top \mathbb{E}[x]$ under extreme spurious correlations, we obtain the following.

Under the extreme shift, the effective training distribution consists of $(y_{+1},d_R)$ and $(y_{-1},d_G)$, so $\mathbb{E}[x]=\tfrac{1}{2}(\mu^{(y_{+1},d_R)}+\mu^{(y_{-1},d_G)})$ and $\beta_0=-\beta^\top\mathbb{E}[x]$. Substituting into the subgroup error formula gives
$\mathbb{P}(f(x)\neq y \mid (y,d))=\Phi\!\bigl(-\tfrac{y\,\beta^\top(\mu^{(y,d)}-\mathbb{E}[x])}{\|\beta\|_\Sigma}\bigr)$.
For $(y_{+1},d_R)$:
$\mu^{(y_{+1},d_R)}-\mathbb{E}[x]
=\tfrac{1}{2}(\mu^{(y_{+1},d_R)}-\mu^{(y_{-1},d_G)})
=\tfrac{1}{2}\tilde{\Delta}$,
hence
$E(y_{+1},d_R)
=\Phi\!\bigl(-\tfrac{\tilde{\Delta}^\top\beta}{2\|\beta\|_\Sigma}\bigr)$.
Applying the same procedure to all four subgroups:

\[
\begin{aligned}
E(y_{+1},d_R) &= \Phi\!\left(-\tfrac{1}{2}\tfrac{\tilde{\Delta}^\top \beta}{\|\beta\|_\Sigma}\right), &\quad &\text{\small(seen)}\\
E(y_{-1},d_R) &= \Phi\!\left(\tfrac{(\tfrac{1}{2}\tilde{\Delta}-\Delta_{\mathrm{core}})^\top \beta}{\|\beta\|_\Sigma}\right), &\quad &\text{\small(unseen)}\\
E(y_{+1},d_G) &= \Phi\!\left(\tfrac{(\tfrac{1}{2}\tilde{\Delta}-\Delta_{\mathrm{core}})^\top \beta}{\|\beta\|_\Sigma}\right), &\quad &\text{\small(unseen)}\\
E(y_{-1},d_G) &= \Phi\!\left(-\tfrac{1}{2}\tfrac{\tilde{\Delta}^\top \beta}{\|\beta\|_\Sigma}\right). &\quad &\text{\small(seen)}
\end{aligned}
\]

Let
\[
z=\tfrac{1}{2}\tfrac{\tilde{\Delta}^\top \beta}{\|\beta\|_\Sigma}.
\]
Then
\[
E(y_{+1},d_R)=\Phi(-z), \qquad E({y_{-1}},d_G)=\Phi(-z).
\]

By Proposition~\ref{prop:erm-solution}, $\beta^*\propto\Sigma^{-1}\tilde{\Delta}$, so $\tilde{\Delta}^\top\beta^*\propto\|\tilde{\Delta}\|_{\Sigma^{-1}}^2>0$, implying $z>0$. Both seen subgroups therefore have identical error $\Phi(-z)<\Phi(0)=0.5$.
The unseen subgroups $(y_{-1},d_R),(y_{+1},d_G)$ have error $\Phi(w)$ where $w=\tfrac{(\frac{1}{2}\tilde{\Delta}-\Delta_{\mathrm{core}})^\top \beta}{\|\beta\|_\Sigma}$.
Note that $w = z - \tfrac{\Delta_{\mathrm{core}}^\top\beta}{\|\beta\|_\Sigma}$, so increasing the core alignment $\Delta_{\mathrm{core}}^\top\beta$ reduces $w$ and thus $\Phi(w)$.
\[
E_{\mathrm{wge}}=\max\{\Phi(-z),\Phi(w)\}.
\]

Finally, writing $\tilde{\Delta}=\Delta_{\mathrm{spur}}+\Delta_{\mathrm{core}}$ and substituting into $w$ and $-z$:
\[
w = \tfrac{1}{2}\tfrac{(\Delta_{\mathrm{spur}}+\Delta_{\mathrm{core}}-2\Delta_{\mathrm{core}})^\top\beta}{\|\beta\|_\Sigma}
= +\tfrac{1}{2}\tfrac{\Delta_{\mathrm{spur}}^\top\beta}{\|\beta\|_\Sigma}
- \tfrac{1}{2}\tfrac{\Delta_{\mathrm{core}}^\top\beta}{\|\beta\|_\Sigma},
\]
\[
-z = -\tfrac{1}{2}\tfrac{(\Delta_{\mathrm{spur}}+\Delta_{\mathrm{core}})^\top\beta}{\|\beta\|_\Sigma}
= -\tfrac{1}{2}\tfrac{\Delta_{\mathrm{spur}}^\top\beta}{\|\beta\|_\Sigma}
- \tfrac{1}{2}\tfrac{\Delta_{\mathrm{core}}^\top\beta}{\|\beta\|_\Sigma}.
\]
Since $E_{\mathrm{wge}}=\Phi(\max\{-z,w\})$, we obtain
\[
E_{\mathrm{wge}}
=\Phi\!\left(\pm \tfrac{1}{2}\frac{\beta^\top \Delta_{\mathrm{spur}}}{\|\beta\|_\Sigma}
-{\tfrac{1}{2}}\frac{\beta^\top \Delta_{\mathrm{core}}}{\|\beta\|_\Sigma}\right).
\]
where $+$ corresponds to $w$ (unseen group is worst) and $-$ corresponds to $-z$ (seen group is worst).
With Proposition~\ref{prop:erm-solution} and the identity
\[
\frac{{\beta^*}^\top v}{\|\beta^*\|_\Sigma}=\cor(\beta^*,v)\,\|v\|_\Sigma,
\]

The claimed expression follows.
\end{proof}

\paragraph{Remark}
The sign $\pm$ appearing in Theorem~\ref{thm:wge-appendix} reflects which subgroup constitutes the worst group. Specifically, since $E_{\mathrm{wge}}=\Phi(\max\{-z,w\})$ where $z=\tfrac{\tilde{\Delta}^\top\beta}{2\|\beta\|_\Sigma}>0$ and $w=\tfrac{(\frac{1}{2}\Delta_{\mathrm{spur}}-\frac{1}{2}\Delta_{\mathrm{core}})^\top\beta}{\|\beta\|_\Sigma}$:
\begin{itemize}
  \item $+$ sign: $\Delta_{\mathrm{spur}}^\top\beta^*>0$, so $w>-z$ and the \emph{unseen} subgroups $(y_{-1},d_R),(y_{+1},d_G)$ are the worst group.
  \item $-$ sign: $\Delta_{\mathrm{spur}}^\top\beta^*<0$, so $-z>w$ and the \emph{seen} subgroups $(y_{+1},d_R),(y_{-1},d_G)$ are the worst group.
\end{itemize}
By Proposition~\ref{prop:erm-solution}, $\beta^*\propto\Sigma^{-1}\tilde{\Delta}$ implies $\Delta_{\mathrm{spur}}^\top\beta^*\propto\|\Delta_{\mathrm{spur}}\|_{\Sigma^{-1}}^2+\Delta_{\mathrm{spur}}^\top\Sigma^{-1}\Delta_{\mathrm{core}}>0$ in the typical regime, so the $+$ sign (unseen groups worst) applies under ERM with spurious correlations.
\paragraph{Remark}
While the extreme correlation assumption is strong, such settings have been widely adopted in prior theoretical studies to isolate and analyze the effect of spurious correlations in a tractable manner~\citep{yao2022improving, lai2024sharp}. Following this established tradition, we present our analysis and worst-group error decomposition under these extreme conditions. Empirically, we demonstrate in Section~\ref{sec:Experiment} that SCER remains effective even under less extreme correlation regimes, suggesting that the insights derived from this theoretical framework extend to more realistic settings.

\subsubsection*{From Theoretical Analysis to Embedding Loss} 

From Theorem~\ref{thm:wge-appendix}, the worst-group error under the Gaussian mixture model can be expressed as
\begin{equation}
E_{\mathrm{wge}}
=\Phi\!\left(
\pm \tfrac{1}{2}\,\mathrm{cor}(\beta^*,\Delta_{\mathrm{spur}})\,\|\Delta_{\mathrm{spur}}\|_\Sigma
-{\tfrac{1}{2}\,}\mathrm{cor}(\beta^*,\Delta_{\mathrm{core}})\,\|\Delta_{\mathrm{core}}\|_\Sigma
\right),
\label{eq:wge-decomp}
\end{equation}
where $\Phi(\cdot)$ is the standard Gaussian CDF.  

Equation~\ref{eq:wge-decomp} shows that two competing components govern the worst-group error:
\[
\underbrace{\mathrm{cor}(\beta^*,\Delta_{\mathrm{spur}})\,\|\Delta_{\mathrm{spur}}\|_\Sigma}_{\text{spurious term}}
\quad\text{and}\quad
\underbrace{\mathrm{cor}(\beta^*,\Delta_{\mathrm{core}})\,\|\Delta_{\mathrm{core}}\|_\Sigma}_{\text{core term}}.
\]

The first term increases the error by aligning with spurious directions, while the second decreases the error by aligning with core directions. Thus, minimizing $E_{\mathrm{wge}}$ requires simultaneously suppressing the spurious term and enhancing the core term.

Motivated by this decomposition, we define the following surrogate regularizers:
\begin{align}
\nonumber
\mathcal{L}_{\mathrm{spur}} &= \mathrm{cor}(\beta,\Delta_{\mathrm{spur}})\,\|\Delta_{\mathrm{spur}}\|_\Sigma, \\
\nonumber
\mathcal{L}_{\mathrm{core}} &= \mathrm{cor}(\beta,\Delta_{\mathrm{core}})\,\|\Delta_{\mathrm{core}}\|_\Sigma,
\end{align}
where $\beta$ denotes the classifier weights. The embedding-level regularization is then given by
\begin{equation}
\mathcal{L}_{\mathrm{embedding}}
= \lambda_{\mathrm{spur}} \,\mathcal{L}_{\mathrm{spur}}
- \lambda_{\mathrm{core}} \,\mathcal{L}_{\mathrm{core}}.
\label{eq:embedding-loss-derivation}
\end{equation}

\paragraph{Remark.}
In Theorem~\ref{thm:wge-appendix}, we derived Equation~\ref{eq:wge-decomp} under the strict ERM setting with Gaussian assumptions, where the optimal $\beta^*$ has the closed form $\beta^* \propto \Sigma^{-1}\tilde{\Delta}$.
In practice, however, $\beta$ corresponds to the weight parameters of a deep neural classifier trained via stochastic optimization. While $\beta$ may not coincide with the ERM minimizer, the decomposition in Equation~\ref{eq:wge-decomp} remains structurally valid: worst-group error is still determined by the trade-off between spurious and core alignment. This observation motivates applying the embedding regularization loss in Equation~\ref{eq:embedding-loss-derivation} to deep models.

%%%% 수빈 끝

\section{Experimental Setup} \label{sec:app_Experiment}
\subsection{Experimental Setting}
This section provides comprehensive details of our experimental configuration for evaluating the SCER framework. We describe the datasets used for evaluation, the model architectures employed across different domains, the baseline methods used for comparison, the hyperparameter settings, and the evaluation metrics. These detailed specifications ensure reproducibility and provide a complete context for the results presented in the main paper.

\subsubsection{Datasets}\label{sec:app_data} We conduct experiments on six benchmark datasets. Four are image datasets, including Waterbirds, CelebA, MetaShift, and ColorMNIST, and two are text datasets, including CivilComments and MultiNLI. These datasets are established benchmarks for evaluating model robustness against spurious correlations. Dataset characteristics and statistics are summarized in Table~\ref{tab:dataset-stats}.
\renewcommand{\arraystretch}{1.2}
\begin{table*}[hptp]
\centering
\caption{Statistics of datasets. For ColorMNIST, $\rho$ denotes the spurious correlation level between label $y$ and domain attribute (color) in the training set. A higher $\rho$ indicates stronger spurious correlation, making color a more predictive but misleading cue for classification. "One Subpopulation Omitted" represents an extreme case in which one group combination is completely absent from the training data. Groups are defined as combinations of class labels and attribute values, and Max/Min group columns show the sample sizes of the largest and smallest groups, respectively. The disparity between these values indicates the degree of group imbalance, which tests model robustness under uneven data distribution.}
\resizebox{\textwidth}{!}{%
\begin{tabular}{lcccccccc}
\toprule
\textbf{Dataset} & \textbf{Data type} & \textbf{\# Attr.} & \textbf{\# Classes} & \textbf{\# Train} & \textbf{\# Val.} & \textbf{\# Test} & \textbf{Max group} & \textbf{Min group} \\
\midrule
Waterbirds          & Image & 2 & 2 & 4,795   & 1,199   & 5,794   & 3,498   & 56    \\
CelebA              & Image & 2 & 2 & 162,770 & 19,867  & 19,962  & 71,629  & 1,387  \\
MetaShift           & Image & 2 & 2 & 2,276   & 349     & 874     & 789     & 196   \\
\midrule
\textbf{ColorMNIST} &       &   &   &         &         &         &         &       \\
$\rho = 80\%$& Image & 2 & 2 & 30,000  & 10,000  & 20,000  & 12,228  & 3,002 \\
$\rho = 90\%$& Image & 2 & 2 & 30,000  & 10,000  & 20,000  & 13,756  & 1,469 \\
$\rho = 95\%$& Image & 2 & 2 & 30,000  & 10,000  & 20,000  & 14,518  & 731   \\
$\rho = 99\%$ & Image & 2 & 2 & 30,000 & 10,000 & 20,000 & 15,111 & 138 \\
One Subpopulation Omitted    & Image & 2 & 2 & 30,000  & 10,000  & 20,000  & 15,249& 0   \\
%Two train envs   & Image & 2 & 2 & 30,000  & 10,000  & 20,000  & 15249  & 0   \\
Two train envs & Image & 2 & 2 & 50,000& 0& 10,000& 21,529& 3,644\\
  - env1  ($\rho = 80\%$)& Image & 2 & 2 & 25,000&  0& 0& 10,117& 2,415\\
  - env2  ($\rho = 90\%$)& Image & 2 & 2 & 25,000&  0& 0& 11,412& 1,229\\
\midrule
CivilComments       & Text  & 8 & 2 & 148,304 & 24,278  & 71,854  & 31,282  & 1,003 \\
CivilComments (4 groups)       & Text  & 2 & 2 & 269,038 & 45,180  & 133,782  & 148,186  & 12,731 \\
MultiNLI           & Text  & 2 & 3 & 206,175 & 82,462  & 123,712 & 67,376  & 1,521 \\
\bottomrule
\end{tabular}
}
\vspace{-10pt}
\label{tab:dataset-stats}
\end{table*}
\renewcommand{\arraystretch}{1.0}
\paragraph{Waterbirds}
Waterbirds~\citep{sagawadistributionally} is a binary classification dataset constructed by superimposing bird images from the CUB dataset~\citep{wah2011caltech} onto backgrounds from the Places dataset~\citep{zhou2017places}. The task involves classifying images as landbirds or waterbirds, where background type creates spurious correlations between land backgrounds and landbirds, and water backgrounds and waterbirds. We use the standard data splits from~\citet{idrissi2022simple}.

\paragraph{CelebA}
CelebA~\citep{liu2015deep} contains approximately 200,000 celebrity images. We formulate the problem as a binary classification task, classifying hair as either blond or non-blond, with gender serving as the spurious attribute. We follow the standard splits from~\citet{idrissi2022simple}.

\paragraph{MetaShift}
From the MetaShift benchmark~\citep{liang2022metashift}, we use the Cats vs. Dogs binary classification task, where background type serves as the spurious attribute through correlations between indoor backgrounds with cats and outdoor backgrounds with dogs. We adopt the unmixed version provided by the dataset authors.

\paragraph{ColorMNIST}
ColorMNIST~\citep{arjovsky2019invariant} is a synthetic variant of MNIST where each digit is assigned a color. The classification task groups digits into two categories: zero and four versus five and nine, with color serving as the spurious attribute through red and green assignments. Training data exhibits strong color-class correlations, validation data is balanced at 50:50, and test data reverses the correlation pattern to 10:90. Following~\citet{arjovsky2019invariant}, we inject 25\% label noise and evaluate under varying spurious correlation strengths of 80:20, 90:10, and 95:5 ratios to assess robustness.

To test the limits of group imbalance, we conducted additional experiments on ColorMNIST with severe distributional skew, in which one subpopulation is completely omitted from training. Specifically, we use 15,249 samples for Label 0 with Color 0, 3,000 samples for Label 1 with Color 0, and 11,751 samples for Label 1 with Color 1, while Label 0 with Color 1 contains zero samples. This extreme setting evaluates model performance when specific subpopulations are entirely missing during training, representing a challenging scenario for robust classification methods.

To test the generalizability of SCER, we conducted additional experiments on ColorMNIST using the environment inference methodology from~\citet{creager2021environment}. Following the prior protocol with two environments in train, we partition the dataset into 50,000 training samples and 10,000 test samples. The training set is further divided into two environments, each containing 25,000 samples that represent different spurious correlation patterns. We maintain consistency with our primary experimental setup by using identical model architectures, training procedures, and evaluation metrics across all experiments.

\paragraph{CivilComments}
CivilComments~\citep{borkan2019nuanced} provides a binary text classification benchmark designed to identify toxic language in online comments. Mentions of demographic groups serve as spurious attributes. We use the standard WILDS splits.

\paragraph{CivilComments (4 groups)} 
 As an additional experiment on the CivilComments~\citep{borkan2019nuanced} dataset, we use the coarse version for both training and evaluation to enable a fair comparison with GroupDRO-ES~\citep{izmailov2022feature}, which is known as the most competitive GroupDRO baseline and represents the state-of-the-art GroupDRO performance. In this setting, the spurious attribute is defined as 1 if the comment mentions at least one of the following categories, including male, female, LGBT, black, white, Christian, Muslim, or other religion. Otherwise, the spurious label is 0. The presence of these eight categories is spuriously correlated with the comment being classified as toxic. This configuration forms four groups, with two classes for toxic and non-toxic comments, and two attributes representing the presence or absence of the spurious attribute.

\paragraph{MultiNLI}
The MultiNLI dataset~\citep{williams2018broad} addresses natural language inference by requiring models to classify the relationship between a premise and a hypothesis as entailment, contradiction, or neutrality. The presence of negation words serves as a confounding factor because they exhibit a strong correlation with contradiction labels. We adopt the data splits established in~\citet{idrissi2022simple}.
\vspace{10pt}

\begin{table*}[t]
\centering
\renewcommand{\arraystretch}{1.05}
\setlength{\tabcolsep}{5pt}
\footnotesize
\caption{Hyperparameters and search spaces.}
\vspace{-3pt}
\begin{tabular}{@{} L{2.8cm} L{3.5cm} L{6.5cm} @{}}
\toprule
\textbf{Model / Method} & \textbf{Hyperparameter} & \textbf{Default, Search Space} \\
\midrule
\multicolumn{3}{@{}l}{\textbf{Base Models}} \\
ResNet & Learning rate & 0.001,\ $10^{\mathrm{Uniform}(-4,\,-2)}$ \\
       & Batch size    & 108,\ $2^{\mathrm{Uniform}(6,\,7)}$ \\
BERT   & Learning rate & $10^{-5}$,\ $10^{\mathrm{Uniform}(-5.5,\,-4)}$ \\
       & Batch size    & 32,\ $2^{\mathrm{Uniform}(3,\,5.5)}$ \\
       & Dropout       & 0.5,\ \texttt{RandomChoice([0, 0.1, 0.5])} \\
\midrule
\multicolumn{3}{@{}l}{\textbf{subpopulation Robustness Methods}} \\
GroupDRO   & $\eta$                          & 0.01,\ $10^{\mathrm{Uniform}(-3,\,-1)}$ \\
CVaRDRO    & $\alpha$                        & 0.1,\ $10^{\mathrm{Uniform}(-2,\,0)}$ \\
LfF        & $q$                             & 0.7,\ $\mathrm{Uniform}(0.05,\,0.95)$ \\
JTT        & First-stage step fraction       & 0.5,\ $\mathrm{Uniform}(0.2,\,0.8)$ \\
           & $\lambda$                       & 10,\ $10^{\mathrm{Uniform}(0,\,2.5)}$ \\
LISA       & $\alpha$                        & 2,\ $10^{\mathrm{Uniform}(-1,\,1)}$ \\
           & $p_{\mathrm{select}}$          & 0.5,\ $\mathrm{Uniform}(0,\,1)$ \\
DFR        & Regularization                  & 0.1,\ $10^{\mathrm{Uniform}(-2,\,0.5)}$\\
ElRep      & $\theta_1$, $\theta_2$         & 0.1,\ $\mathrm{Uniform}(0,\,1)$ \\
\midrule
\multicolumn{3}{@{}l}{\textbf{Domain-Invariant Methods}} \\
IRM   & $\lambda$                       & 100,\ $10^{\mathrm{Uniform}(-1,\,5)}$ \\
      & Iterations of penalty annealing & 500,\ $10^{\mathrm{Uniform}(0,\,4)}$ \\
MMD   & $\gamma$                        & 1,\ $10^{\mathrm{Uniform}(-1,\,1)}$ \\
\midrule
\multicolumn{3}{@{}l}{\textbf{Data Augmentation}} \\
Mixup & $\alpha$                        & 0.2,\ $10^{\mathrm{Uniform}(0,\,4)}$ \\
\midrule
\multicolumn{3}{@{}l}{\textbf{Class Imbalance Methods}} \\
ReSample   & -- & Follow default implementation \\
ReWeight   & -- & Follow default implementation \\
SqrtWeight   & -- & Follow default implementation \\
Focal Loss & $\gamma$ & 1,\ $0.5 \times 10^{\mathrm{Uniform}(0,\,1)}$ \\
CBLoss     & $\beta$  & 0.9999,\ $1 - 10^{\mathrm{Uniform}(-5,\,-2)}$ \\
LDAM       & max\_m   & 0.5,\ $10^{\mathrm{Uniform}(-1,\,-0.1)}$ \\
           & scale    & 30,\ \texttt{RandomChoice([10, 30])} \\
\midrule
\multicolumn{3}{@{}l}{\textbf{Proposed Method (SCER)}} \\
SCER & $\eta$                    & 0.01,\ $10^{\mathrm{Uniform}(-3,\,-1)}$ \\
     & $\lambda_{\mathrm{core}}$ & 1,\ $\mathrm{Uniform}(0,\,1)$ \\
     & $\lambda_{\mathrm{spur}}$ & 1,\ $\mathrm{Uniform}(0,\,1)$ \\
\bottomrule
\end{tabular}
\label{tab:hyperparameter}
\vspace{-10pt}
\end{table*}

\begin{table}[t]
\centering
\caption{Summary of attribute availability settings across baseline methods.}
\vspace{-5pt}
\label{tab:attr-settings}

\resizebox{1\columnwidth}{!}{%
\begin{tabular}{lccc}
\toprule
\textbf{Train Attr.} & \textbf{Valid/Held-out Attr.} & \textbf{Method} \\
\midrule
Known & Known & GroupDRO, LISA, CVaRDRO, ReSample, PDE, ElRep(with GroupDRO) \\
Unknown & Known (Val) & JTT, LfF, CRT, SqrtReWeight \\
Unknown & Known (Held-out) & DFR \\
Unknown & Unknown & ReWeight, ERM, Mixup, MMD, Focal Loss, CB Loss, LDAM, Balanced Softmax \\
\bottomrule
\end{tabular}}
\vspace{-15pt}
\end{table}
\vspace{-10pt}

\subsubsection{Baselines}\label{sec:app_base}

For fair comparison, we followed the training protocol of~\citet{gulrajanisearch}, performing random search over each method's joint hyperparameter space. Hyperparameter configurations follow prior work~\citep{yang2023change} for all baseline methods. We selected the hyperparameters that achieved the highest Worst Accuracy for each algorithm, then conducted three independent runs with different random seeds (0, 1, 2) to account for variance and ensure robustness of reported results. Final average performance and standard deviation are reported across these runs. Complete hyperparameter details are provided in Table~\ref{tab:hyperparameter}, which summarizes the hyperparameters used for each baseline. In addition to standard optimization parameters such as batch size, learning rate, and dropout, we include key weighting parameters $\eta$, $\lambda_{spur}$, and $\lambda_{core}$ to suppress spurious features and enhance core features. These values are chosen to encourage the model to learn generalizable representations that generalize across distribution shifts.

We provide a clear summary in Table~\ref{tab:attr-settings} ~indicating which baselines utilize group annotations at each stage: training, held-out selection, and validation. Following the evaluation protocol established in prior work~\citep{yang2023change}, we adopt the Known Attributes setting, in which group annotations are available for both the training and validation sets. All baselines are evaluated under this unified setting to ensure consistency and fair comparison. While some methods are designed to operate without group annotations during training or validation, all methods ultimately rely on group information for final model selection. Specifically, we select the best checkpoint for each method based on worst-group accuracy, ensuring alignment in model selection across all approaches. Thus, even when a method does not internally use group information during optimization, group annotations are consistently used for model selection to ensure fair and comparable evaluation.

To comprehensively evaluate our method's robustness, we compare its performance against a diverse set of baselines spanning standard ERM, subpopulation robustness methods, domain-invariant approaches, data augmentation techniques, and class imbalance methods. The primary baseline results appear in Tables~\ref{tab:main-results_paper},~\ref{tab:CMNIST_paper},~\ref{tab:CMNIST_r80_paper}, and~\ref{tab:text-results_paper} in the main text, with complete results for all baselines provided in Tables~\ref{tab:main-results},~\ref{tab:CMNIST},~\ref{tab:CMNIST_r80} and~\ref{tab:text-results}. Our baseline selection follows evaluation criteria established in recent studies~\citep{deng2023robust, wen2025elastic}, with particular attention to whether each method utilizes group annotations during training. 

To ensure fair comparison, the main text reports only methods that rely on group annotations throughout the entire training and validation pipeline, while methods not satisfying this criterion are presented separately in the appendix to reflect differences in evaluation settings.
%\vspace{-10pt}

\paragraph{Empirical Risk Minimization}
ERM minimizes the average loss over all training samples without explicitly addressing group-specific performance disparities.

\paragraph{subpopulation Robustness Methods}
GroupDRO~\citep{sagawadistributionally} is a representative approach to distributional robustness that improves worst-group performance by assigning greater importance to groups with higher losses. CVaRDRO~\citep{duchi2021learning} extends this approach to the individual sample level, enabling more fine-grained adjustment by assigning importance weights to individual samples. LfF~\citep{nam2020learning} employs a co-training approach that simultaneously trains a biased model and a debiased model, guiding the model to learn by distinguishing between biased and unbiased features. JTT~\citep{pmlr-v139-liu21f} is a two-stage strategy that identifies complex samples using information obtained from initial standard empirical risk minimization (ERM) training, then improves model robustness by increasing the weights of these samples in the second training stage. LISA~\citep{yao2022improving} utilizes data augmentation techniques that mix data within and across attributes, encouraging the model to learn invariant features of attributes. DFR~\citep{kirichenkolast} is an efficient method that improves robustness by retraining only the last layer using a balanced validation set, rather than the entire model. PDE~\citep{deng2023robust} employs a strategy of progressively expanding training data, guiding the model to first focus on core features and subsequently learn spurious features. Finally, ElRep~\citep{wen2025elastic} introduces both Nuclear-norm and Frobenius-norm regularization to representation vectors, enabling robust group generalization representation learning against spurious correlations by simultaneously emphasizing important features and maintaining diversity.
\vspace{-7pt}
\paragraph{Domain-Invariant Methods}

IRM~\citep{arjovsky2019invariant} is a method for finding invariant predictors that work commonly across multiple domains. Specifically, it trains feature extractors so that a fixed predictor becomes optimal in each domain, enabling the model to capture important characteristics effectively regardless of domain changes. This approach ensures robust performance even when domains change. MMD~\citep{li2018domain} is a method that numerically measures the difference in feature distributions across domains and trains to reduce this difference, thereby mitigating distribution mismatches. In other words, it is an approach that aims to improve generalization performance by making feature representations across domains as similar as possible.
\vspace{-10pt}

\paragraph{Data Augmentation}
Mixup~\citep{zhang2018mixup} is a technique that generates synthetic training data by linearly interpolating two randomly sampled pairs of inputs and their labels. This smooths decision boundaries and serves as regularization to improve generalization performance and prevent overfitting. It is a simple yet effective data augmentation method that artificially expands the diversity of the training data by learning intermediate points between samples in the data space.
\paragraph{Class Imbalance Methods}
ReSample~\citep{japkowicz2000class} mitigates class imbalance in training data through resampling techniques. ReWeight~\citep{japkowicz2000class} assigns weights to the loss function proportional to class frequency. SQRTWeight is another loss adjustment method that adjusts loss weights using the inverse square root of class frequencies, thereby increasing the influence of minority classes in imbalanced data. Focal Loss~\citep{lin2017focal} focuses on difficult samples by reducing the contribution of easy-to-learn samples. CB Loss~\citep{cui2019class} calculates sample weights based on effective class size. LDAM~\citep{cao2019learning} assigns distinct margins to each class to balance the decision boundaries. Balanced Softmax~\citep{ren2020balanced} modifies the softmax function itself to accommodate class imbalance.

%We compared our model against these numerous baselines and demonstrated the superiority of our approach, confirming the necessity of embedding regularization.
\vspace{-10pt}

\begin{table*}
\centering
\caption{Complete performance comparison on multiple datasets, each exhibiting different levels of spurious correlations. $^{\dagger}$ denotes the performance reported from~\citet{yang2023change} except for ColorMNIST. $^{\ddagger}$ denotes the performance reported from \citet{deng2023robust} for Waterbird and CelebA. $^{\dagger\dagger}$ denotes the performance reported from \citet{wen2025elastic} for Waterbird and CelebA. For the remaining datasets, the detailed experimental settings follow \citet{yang2023change}, as described in Appendix~\ref{sec:app_base}. SCER achieves the highest worst-group accuracy in Waterbirds, CelebA, MetaShift, and ColorMNIST, demonstrating its robustness in subpopulation shift scenarios.}
\resizebox{\textwidth}{!}{%
\begin{tabular}{lcccccccc}
\toprule
\textbf{Algorithm} & \multicolumn{2}{c}{\textbf{Waterbirds}} & \multicolumn{2}{c}{\textbf{CelebA}} & \multicolumn{2}{c}{\textbf{MetaShift}} & \multicolumn{2}{c}{\textbf{ColorMNIST ($\rho=80\%$)}} \\
& \textbf{Avg Acc} & \textbf{Worst Acc} & \textbf{Avg Acc} & \textbf{Worst Acc} & \textbf{Avg Acc} & \textbf{Worst Acc} & \textbf{Avg Acc} & \textbf{Worst Acc} \\
\midrule
ERM$^{\dagger}$        & 84.1 $\pm$1.7  & 69.1 $\pm$4.7  & 95.1 $\pm$0.2  & 62.6 $\pm$1.5  & 91.3 $\pm$0.3  & 82.6 $\pm$0.4  & 38.2 $\pm$2.4  & 30.9 $\pm$2.7  \\
Mixup$^{\dagger}$    & 89.5 $\pm$0.4  & 78.2 $\pm$0.4  & 95.4 $\pm$0.1  & 57.8 $\pm$0.8  & 91.6 $\pm$0.3  & 81.0 $\pm$0.8  & 41.2 $\pm$14.0 & 12.4 $\pm$2.0  \\
GroupDRO$^{\dagger}$   & 88.8 $\pm$1.8  & 78.6 $\pm$1.0  & 91.4 $\pm$0.6  & 89.0 $\pm$0.7  & 91.0 $\pm$0.1  & 85.6 $\pm$0.4  & 73.5 $\pm$0.3  & 73.1 $\pm$0.1  \\
IRM$^{\dagger}$        & 88.4 $\pm$0.1  & 74.5 $\pm$1.5  & 94.7 $\pm$0.8  & 63.0 $\pm$2.5  & 91.8 $\pm$0.4  & 83.0 $\pm$0.1  & 51.6 $\pm$11.0 & 25.0 $\pm$6.4  \\
CVaRDRO$^{\dagger}$     & 89.8 $\pm$0.4  & 75.5 $\pm$2.2  & 95.2 $\pm$0.1  & 64.1 $\pm$2.8  & 92.1 $\pm$0.2  & 84.6 $\pm$0.0  & 42.7 $\pm$5.9  & 30.9 $\pm$3.5  \\
JTT$^{\dagger}$       & 88.8 $\pm$0.6  & 72.0 $\pm$0.3  & 90.4 $\pm$2.3  & 70.0 $\pm$10.2 & 91.2 $\pm$0.5  & 83.6 $\pm$0.4  & 69.1 $\pm$1.9  & 65.2 $\pm$4.2  \\
LfF$^{\dagger}$       & 87.0 $\pm$0.3  & 75.2 $\pm$0.7  & 81.1 $\pm$5.6  & 53.0 $\pm$4.3  & 80.2 $\pm$0.3  & 73.1 $\pm$1.6  & 74.0 $\pm$4.1  & 4.1 $\pm$2.3   \\
LISA$^{\dagger}$       & 92.8 $\pm$0.2  & 88.7 $\pm$0.6  & 92.6 $\pm$0.1  & 86.3 $\pm$1.2  & 89.5 $\pm$0.4  & 84.1 $\pm$0.4  & 73.8 $\pm$0.1  & 73.2 $\pm$0.1  \\
MMD$^{\dagger}$       & 93.0 $\pm$0.1  & 83.9 $\pm$1.4  & 92.5 $\pm$0.7  & 24.4 $\pm$2.0  & 89.4 $\pm$0.1  & 85.9 $\pm$0.7  & 41.7 $\pm$0.7  & 27.5 $\pm$7.0  \\
ReSample$^{\dagger}$     & 89.4 $\pm$0.9  & 77.7 $\pm$1.2  & 92.0 $\pm$0.8  & 87.4 $\pm$0.8  & 91.2 $\pm$0.1  & 85.6 $\pm$0.4  & 73.7 $\pm$0.1  & 72.2 $\pm$0.3  \\
ReWeight$^{\dagger}$    & 91.8 $\pm$0.2  & 86.9 $\pm$0.7  & 91.9 $\pm$0.5  & 89.7 $\pm$0.2  & 91.7 $\pm$0.4  & 85.6 $\pm$0.4  & 74.1 $\pm$0.1  & 73.4 $\pm$0.1  \\
SqrtReWeight$^{\dagger}$  & 88.7 $\pm$0.3  & 78.6 $\pm$0.1  & 93.6 $\pm$0.1  & 82.4 $\pm$0.5  & 91.5 $\pm$0.2  & 84.6 $\pm$0.7  & 69.4 $\pm$0.7  & 66.7 $\pm$2.0  \\
CBLoss$^{\dagger}$      & 91.3 $\pm$0.7  & 86.2 $\pm$0.3  & 91.2 $\pm$0.7  & 89.4 $\pm$0.7  & 91.7 $\pm$0.4  & 85.5 $\pm$0.4  & 73.6 $\pm$0.2  & 72.8 $\pm$0.2  \\
Focal$^{\dagger}$       & 89.3 $\pm$0.2  & 71.6 $\pm$0.8  & 94.9 $\pm$0.3  & 59.1 $\pm$2.0  & 91.7 $\pm$0.2  & 81.5 $\pm$0.0  & 40.7 $\pm$4.6  & 31.2 $\pm$3.8  \\
LDAM$^{\dagger}$       & 87.3 $\pm$0.5  & 71.0 $\pm$1.8  & 94.5 $\pm$0.2  & 59.6 $\pm$2.4  & 91.5 $\pm$0.1  & 83.6 $\pm$0.4  & 39.3 $\pm$3.1  & 26.9 $\pm$6.1  \\
BSoftmax$^{\dagger}$     & 88.4 $\pm$1.3  & 74.1 $\pm$0.9  & 91.9 $\pm$0.1  & 83.3 $\pm$0.5  & 91.6 $\pm$0.2  & 83.1 $\pm$0.7  & 40.1 $\pm$1.8  & 33.3 $\pm$2.5  \\
DFR$^{\dagger}$       & 92.3 $\pm$0.2  & 91.0 $\pm$0.3& 91.9 $\pm$0.1  & 90.4 $\pm$0.1  & 88.4 $\pm$0.3  & 85.4 $\pm$0.4  & 68.4 $\pm$0.1  & 67.3 $\pm$0.2  \\
PDE$^{\ddagger}$       & 92.4 $\pm$0.8  & 90.3 $\pm$0.3  & 92.4 $\pm$0.8  & 91.0 $\pm$0.4  & 87.4 $\pm$0.1  & 78.1 $\pm$0.1  & 76.6 $\pm$0.8  & 72.9 $\pm$0.3  \\
ElRep$^{\dagger\dagger}$       &92.9 $\pm$0.7  & 88.8 $\pm$0.7  & 92.8 $\pm$0.2  & 91.4 $\pm$1.0  & $85.9 \pm 0.6$ & $72.1 \pm 2.5$  &  $50.3 \pm 0.6$ & $46.5 \pm 4.3$  \\
SCER        & 92.1$\pm$0.2& \textbf{91.2 $\pm$0.2}& 92.7 $\pm$0.2& \textbf{91.4 $\pm$0.1}& 91.6 $\pm$0.3& \textbf{86.7 $\pm$0.8}& 74.1 $\pm$0.1& \textbf{73.6\ $\pm$0.2}\\
\bottomrule
\end{tabular} 
}
\vspace{5pt}
\label{tab:main-results}
%\vspace{-10pt}
\end{table*}

\section{Additional experiment} \label{sec:add_Experiment}
\vspace{-4pt}
This section presents supplementary experimental results that further validate the effectiveness and robustness of the SCER framework.
\vspace{-7pt}
\subsection{Quantitative Analysis}
\label{sec:add_quan}
\vspace{-4pt}
%We conduct a comprehensive quantitative analysis to evaluate the computational complexity and scalability of the SCER framework. These experiments provide detailed insights into SCER's adaptation mechanisms across various scenarios and assess its practical feasibility for real-world deployment.
\label{sec:add_quantitative}
Here, we present the complete results for quantitative analyses omitted from the main text.
\vspace{13pt}

\begin{table*}[t]
\centering
\caption{Complete performance comparison on ColorMNIST under increasing spurious correlation levels. SCER achieves the highest worst-group accuracy across different spurious correlation levels, demonstrating robustness to spurious feature reliance in ColorMNIST.}
\vspace{-5pt}
\resizebox{1\textwidth}{!}{%
\begin{tabular}{lcccccccc}
\toprule
\multicolumn{9}{c}{\textbf{ColorMNIST}} \\
\midrule
& \multicolumn{2}{c}{$\rho = 80\%$} 
& \multicolumn{2}{c}{$\rho = 90\%$} 
& \multicolumn{2}{c}{$\rho = 95\%$}
& \multicolumn{2}{c}{$\rho = 99\%$} \\
\textbf{Algorithm} 
& \textbf{Avg Acc} & \textbf{Worst Acc} 
& \textbf{Avg Acc} & \textbf{Worst Acc} 
& \textbf{Avg Acc} & \textbf{Worst Acc}
& \textbf{Avg Acc} & \textbf{Worst Acc} \\
\midrule
ERM                 & 38.2 $\pm$2.4 & 30.9 $\pm$2.7 & 31.9 $\pm$7.4 & 7.8 $\pm$3.4 & 41.1 $\pm$13.0 & 10.1 $\pm$4.5 & 50.7 $\pm$8.2 & 8.5 $\pm$5.0\\
GroupDRO            & 73.5 $\pm$0.3 & 73.1 $\pm$0.1 & 73.5 $\pm$0.0 & 72.7 $\pm$0.3 & 72.3 $\pm$0.1 & 70.7 $\pm$0.2 & 50.3 $\pm$3.9 & 38.7 $\pm$0.8\\
LISA                & 73.8 $\pm$0.1 & 73.2 $\pm$0.1 & 73.3 $\pm$0.1 & 72.9 $\pm$0.2 & 72.6 $\pm$0.4 & 71.4 $\pm$0.6 & 53.1 $\pm$7.2 & 10.2 $\pm$8.1\\
ReSample            & 73.7 $\pm$0.1 & 72.2 $\pm$0.3 & 72.6 $\pm$0.5 & 70.8 $\pm$1.6 & 71.8 $\pm$0.1 & 70.7 $\pm$0.5 & 57.1 $\pm$3.0 & 45.4 $\pm$10.0\\
ReWeight            & 74.1 $\pm$0.1 & 73.4 $\pm$0.1 & 73.6 $\pm$0.0 & 72.8 $\pm$0.2 & 72.4 $\pm$0.5 & 70.7 $\pm$1.1 & -- & --\\
CBLoss              & 73.6 $\pm$0.2 & 72.8 $\pm$0.2 & 72.7 $\pm$0.2 & 71.7 $\pm$0.1 & 71.8 $\pm$0.3 & 70.8 $\pm$0.3 & -- & --\\
PDE                 & 76.6 $\pm$0.8 & 72.9 $\pm$0.3 & 72.6 $\pm$0.2 & 70.2 $\pm$0.7 & 73.4 $\pm$0.4 & 70.0 $\pm$0.3 & 56.3 $\pm$4.3 & 47.5 $\pm$8.9\\
SCER                & 74.1 $\pm$0.1 & \textbf{73.6 $\pm$0.2} & 
                      73.6 $\pm$0.1 & \textbf{73.0 $\pm$0.1} & 
                      73.5 $\pm$0.3 & \textbf{72.8 $\pm$0.3} 
                      & 61.0 $\pm$2.0 & \textbf{56.0 $\pm$2.2}\\
\bottomrule
\end{tabular}
}
\label{tab:CMNIST}
%\vspace{10pt}
\end{table*}
%\vspace{10pt}

\begin{minipage}{0.43\textwidth}
\centering
\vspace{-\baselineskip}
\captionof{table}{Complete performance on ColorMNIST with one group absent. SCER achieves the highest worst-group accuracy.}
\vspace{-\baselineskip}
\vspace{7pt}
\resizebox{\textwidth}{!}{%
\begin{tabular}{lcc}
\toprule
\multicolumn{3}{c}{\textbf{ColorMNIST} (One Subpopulation Omitted)} \\
\midrule
\textbf{Algorithm} & \textbf{Avg Acc} & \textbf{Worst Acc} \\
\midrule
ERM       & 52.9 $\pm$8.4&  9.7 $\pm$5.4\\
GroupDRO   & 53.6 $\pm$2.5& 44.1 $\pm$6.2\\
LISA       & 49.5 $\pm$2.4& 13.5 $\pm$11.0\\
ReSample   &  49.1 $\pm$0.8& 16.3  $\pm$13.0\\
ReWeight   & 52.1 $\pm$4.7&  10.0 $\pm$8.2\\
CBLoss     & 53.0 $\pm$3.2 & 13.3 $\pm$10.0 \\
PDE        & 71.0 $\pm$12.6& 8.3$\pm$13.9 \\
SCER        & 65.3 $\pm$1.0& \textbf{59.6 $\pm$1.0}\\
\bottomrule
\end{tabular}}
\label{tab:CMNIST_r80}
\end{minipage}%
\hfill
%\vspace{-\baselineskip}
\begin{minipage}{0.53\textwidth}
\vspace{-\baselineskip}
\paragraph{Image Dataset}
Table~\ref{tab:main-results} reports the complete results for all baselines on image datasets. Across this comprehensive set of methods, SCER consistently achieves the highest worst-group accuracy, demonstrating its superior robustness compared to all competing approaches.
\vspace{10pt}
\paragraph{Image Dataset with Stronger Spurious Correlations}
Table~\ref{tab:CMNIST} reports the complete results on ColorMNIST under varying spurious correlation levels, including all methods achieving worst-group accuracy above 70\% in Table~\ref{tab:main-results}. SCER achieves the highest worst-group accuracy across most settings, confirming its robustness under subpopulation shift.
\end{minipage}
%\vspace{13pt}

 In addition to the standard evaluation settings in Table~\ref{tab:CMNIST}, we examine a more challenging scenario in which a minority group is completely absent during training. Table~\ref{tab:CMNIST_r80} reports the complete results under this extreme setting. As shown, SCER achieves the highest worst-group accuracy, significantly outperforming all competing methods. Notably, class imbalance methods such as ReWeight~\citep{japkowicz2000class} and CB Loss~\citep{cui2019class} exhibit substantial performance degradation when a group is entirely missing, similar to other approaches. In contrast, SCER maintains robust performance even under this extreme distributional shift, demonstrating its effectiveness in handling severe group imbalance.
\vspace{13pt}

\begin{minipage}{0.33\textwidth}
\centering
\vspace{-\baselineskip}
\captionof{table}{Performance comparison without explicit bias labels in noisy environments.}
\vspace{-\baselineskip}
\vspace{2pt}
\resizebox{\textwidth}{!}{%
\begin{tabular}{lc}
\toprule
\multicolumn{2}{c}{\textbf{ColorMNIST (Two train envs)}} \\
Method & Avg Acc \\
\midrule
IRM & 54.8 $\pm$8.3\\
EIIL (IRM) & 53.7 $\pm$2.1 \\
EIIL + DRO & 54.6 $\pm$5.9 \\
EIIL + SCER & \textbf{65.1 $\pm$2.8} \\
\bottomrule
\end{tabular}}
\label{tab:eiil_results_70}
\end{minipage}
\hfill
\begin{minipage}{0.64\textwidth}
\vspace{-\baselineskip}
%\vspace{10pt}
\paragraph{Integration with Environment Inference Methods.}
Following Table \ref{tab:eiil_results}, we additionally experimented to verify whether SCER maintains robustness when the environment partitioning accuracy is imperfect, at approximately 70\%. All other settings remain identical to those in Table \ref{tab:eiil_results}, and the results are presented in Table \ref{tab:eiil_results_70}. Even under reduced environment partitioning accuracy, EIIL + SCER achieves an average accuracy of 65.1\%, outperforming both EIIL + DRO at 54.6\% and EIIL with IRM at 53.7\%. This suggests that SCER operates effectively even under imperfect environment partitioning conditions.
\end{minipage}
\vspace{18pt}

\begin{minipage}{0.4\textwidth}
\vspace{-\baselineskip}
%\vspace{5pt}
\paragraph{Text Dataset}
Table~\ref{tab:text-results} presents the complete performance comparison on text datasets, including CivilComments and MultiNLI. SCER consistently achieves superior worst-group accuracy across these benchmarks, demonstrating its effectiveness beyond image domains. On CivilComments, SCER attains a worst-group accuracy of 74.0\% . Similarly, on MultiNLI, SCER achieves the best worst-group accuracy of 76.8\%.\\

Tables~\ref{tab:main-results},~\ref{tab:CMNIST},~\ref{tab:CMNIST_r80}, and~\ref{tab:text-results} demonstrate that SCER consistently achieves superior performance across diverse benchmark datasets and in comparison with various state-of-the-art algorithms. Notably, SCER achieves robust performance not only on vision tasks but also on natural language understanding tasks, confirming that our mechanism of separating spurious and core features at the embedding level generalizes effectively across domains.

\end{minipage}%
\hfill
\begin{minipage}{0.58\textwidth}
\centering
\vspace{-\baselineskip}
\captionof{table}{Complete performance comparison on text datasets. $^{\ddagger}$ from \citet{deng2023robust} for Civilcomments. $^{\dagger\dagger}$ from \citet{wen2025elastic} for Civilcomments. SCER achieves the best worst-group accuracy across multi-class and domain settings.}
\vspace{-\baselineskip}
\vspace{4pt}
\resizebox{\textwidth}{!}{%
\begin{tabular}{lcccc}
\toprule
\textbf{Algorithm} & \multicolumn{2}{c}{\textbf{CivilComments}} & \multicolumn{2}{c}{\textbf{MultiNLI}} \\
& \textbf{Avg Acc} & \textbf{Worst Acc} & \textbf{Avg Acc} & \textbf{Worst Acc} \\
\midrule
ERM$^{\dagger}$           & 85.4 $\pm$0.2 & 63.7 $\pm$1.1 & 80.9 $\pm$0.1 & 66.8 $\pm$0.5 \\
Mixup$^{\dagger}$         & 84.9 $\pm$0.3 & 66.1 $\pm$1.3 & 81.4 $\pm$0.3 & 68.5 $\pm$0.6 \\
GroupDRO$^{\dagger}$      & 81.8 $\pm$0.6 & 70.6 $\pm$1.2 & 81.1 $\pm$0.3 & 76.0 $\pm$0.7 \\
IRM$^{\dagger}$           & 85.5 $\pm$0.0 & 63.2 $\pm$0.8 & 77.8 $\pm$0.6 & 63.6 $\pm$1.3 \\
CVaRDRO$^{\dagger}$       & 83.5 $\pm$0.3 & 68.7 $\pm$1.3 & 75.1 $\pm$0.1 & 63.0 $\pm$1.5 \\
JTT$^{\dagger}$           & 83.3 $\pm$0.1 & 64.3 $\pm$1.5 & 80.9 $\pm$0.5 & 69.1 $\pm$0.1 \\
LfF$^{\dagger}$           & 65.5 $\pm$5.6 & 51.0 $\pm$6.1 & 71.7 $\pm$1.1 & 63.6 $\pm$2.9 \\
LISA$^{\dagger}$          & 82.7 $\pm$0.1 & 73.7 $\pm$0.3 & 80.3 $\pm$0.4 & 73.3 $\pm$1.0 \\
MMD$^{\dagger}$           & 84.6 $\pm$0.2 & 54.5 $\pm$1.4 & 78.8 $\pm$0.1 & 69.1 $\pm$1.5 \\
ReSample$^{\dagger}$      & 82.2 $\pm$0.0 & 73.3 $\pm$0.5 & 77.2 $\pm$0.2 & 72.3 $\pm$0.8 \\
ReWeight$^{\dagger}$      & 82.5 $\pm$0.0 & 72.5 $\pm$0.0 & 81.0 $\pm$0.2 & 68.8 $\pm$0.4 \\
SqrtReWeight$^{\dagger}$  & 83.3 $\pm$0.5 & 71.7 $\pm$0.4 & 80.7 $\pm$0.3 & 69.5 $\pm$0.7 \\
CBLoss$^{\dagger}$        & 82.9 $\pm$0.1 & 73.3 $\pm$0.2 & 80.6 $\pm$0.1 & 72.2 $\pm$0.3 \\
Focal$^{\dagger}$         & 85.5 $\pm$0.2 & 62.0 $\pm$1.0 & 80.7 $\pm$0.2 & 69.4 $\pm$0.7 \\
LDAM$^{\dagger}$          & 81.9 $\pm$2.2 & 37.4 $\pm$8.1 & 80.7 $\pm$0.3 & 69.6 $\pm$1.6 \\
BSoftmax$^{\dagger}$      & 83.8 $\pm$0.0 & 71.2 $\pm$0.4 & 80.9 $\pm$0.1 & 66.9 $\pm$0.4 \\
DFR$^{\dagger}$           & 83.3 $\pm$0.0 & 69.6 $\pm$0.2 & 81.7 $\pm$0.0 & 68.5 $\pm$0.2 \\
PDE$^{\ddagger}$ & 86.3 $\pm$1.7 & 71.5 $\pm$0.5  & 69.1 $\pm$0.3  & 65.8 $\pm$0.6 \\
ElRep$^{\dagger\dagger}$ & 79.0 $\pm$0.7 & 70.5 $\pm$0.5  & 69.0 $\pm$3.8 & 66.8 $\pm$5.2 \\
SCER          &  81.7
 $\pm$0.4 
&  \textbf{74.0
 $\pm$1.0}&   80.4  $\pm$0.1&  \textbf{76.8 $\pm$0.5}\\
\bottomrule
\end{tabular}}
%\vspace{0.3em}
\label{tab:text-results}
\end{minipage}

\begin{table}[h]
\centering
\caption{Performance comparison on Waterbirds, CelebA, MultiNLI, and CivilComments (4 groups) with GroupDro-ES. $^{\izsym}$ from \citet{izmailov2022feature}. Experimental and dataset settings also follow \citet{izmailov2022feature}. SCER achieves the best worst-group accuracy across all datasets.}
\vspace{5pt}
\label{tab:worst-acc-scer}
\resizebox{\textwidth}{!}{%
\begin{tabular}{lcccc}
\toprule
\multirow{2}{*}{\textbf{Algorithm}} 
    & \multicolumn{1}{c}{\textbf{Waterbirds}} 
    & \multicolumn{1}{c}{\textbf{CelebA}} 
    & \multicolumn{1}{c}{\textbf{MultiNLI}}
    & \multicolumn{1}{c}{\textbf{CivilComments (4 groups)}} \\
 & \textbf{Worst Acc} & \textbf{Worst Acc} & \textbf{Worst Acc} & \textbf{Worst Acc} \\
\midrule

GroupDRO-ES$^{\izsym}$  & $90.7 \pm 0.6$ & $90.6 \pm 1.6$ & $73.5$ & $80.4$ \\
SCER           & $\mathbf{91.7 \pm 0.1}$ & $\mathbf{91.9 \pm 0.3}$ & $\mathbf{77.9}$ & $\mathbf{82.9}$ \\
\bottomrule
\end{tabular}}
\vspace{-15pt}
\end{table}
%\vspace{-20pt}
\paragraph{Deep Analysis with GroupDRO}
% 세팅 다른거 추가해야할듯 
SCER proposes a novel objective function that decomposes worst-group error into spurious and core components, then directly regularizes the embedding space to suppress spurious correlations while enhancing core features. To validate the effectiveness of this approach, we conducted an in-depth comparison with GroupDRO. Since early stopping is well known to be essential for achieving optimal GroupDRO performance~\citep{izmailov2022feature}, we compared SCER with GroupDRO-ES~\citep{izmailov2022feature}, which serves as the most competitive GroupDRO baseline. Note that for a fair and direct comparison with GroupDRO-ES, we adopted the experimental setup from~\citet{izmailov2022feature} and used their reported results for GroupDRO-ES, which differ from the configuration in our main experiments following~\citet{yang2023change}.

As shown in Table~\ref{tab:worst-acc-scer}, SCER achieves consistent performance improvements over GroupDRO-ES across all datasets. Specifically, SCER demonstrates gains of 1.0\% on Waterbirds and 1.3\% on CelebA, with a particularly notable 4.4\% improvement on MultiNLI. This represents meaningful progress, given that most group-robust methods struggle to exceed worst-group accuracy in the low 70\% range on this dataset. For CivilComments, while Tables~\ref{tab:text-results_paper} and~\ref{tab:text-results} follow the 16 group setting with two classes and eight attributes as in prior work~\citep{yang2023change}, we also conducted experiments using the 4 group setting with two classes and two attributes in Table~\ref{tab:worst-acc-scer} to allow a fair comparison with~\citep{izmailov2022feature}, achieving a 2.5\% improvement. Detailed information about the dataset configurations is provided in Table~\ref{tab:dataset-stats}. 

Beyond quantitative gains, we provide qualitative evidence that SCER fundamentally restructures the embedding space. Figure~\ref{fig:fig_emb_waterbirds} shows through t-SNE analysis that while ERM and GroupDRO separate clusters by background, SCER achieves label-based clustering independent of spurious features, confirming domain-invariant representations. Figure~\ref{fig:fig_grad_paper} further demonstrates via Grad-CAM that SCER attends to target objects rather than backgrounds, showing that our embedding regularization yields genuine improvements in representation learning.

%\vspace{10pt}
We also conduct a comprehensive quantitative analysis to evaluate the computational complexity and scalability of the SCER framework. These experiments provide detailed insights into SCER's adaptation mechanisms across various scenarios and assess its practical feasibility for real-world deployment.
\vspace{18pt}

\begin{minipage}{0.38\textwidth}
\centering
\vspace{-\baselineskip}
\captionof{table}{Average per-iteration training time (seconds). }
\vspace{-\baselineskip}
\vspace{2pt}
\resizebox{\textwidth}{!}{%
\begin{tabular}{lc}
\toprule
Method & Avg One Iteration (s) \\
\midrule
ERM      & 0.0654 \\
GroupDRO & 0.0915 \\
DFR      & 0.1025 \\
SCER     & 0.0988 \\
\bottomrule
\end{tabular}}
\label{tab:complexity}
\end{minipage}%
\hfill
\
\begin{minipage}{0.57\textwidth}
\vspace{-\baselineskip}
\paragraph{Complexity analysis}
While our method requires computational investment for correlation metric and mean embedding computations, it maintains computational costs comparable to established robust training methods. As shown in Table~\ref{tab:complexity}, our approach requires 0.0988 seconds per iteration, which is marginally higher than GroupDRO at 0.0915 seconds but faster than the two-stage DFR method at 0.1025 seconds.
\end{minipage}
\vspace{4pt}

Importantly, DFR's computational overhead stems from its inherent two-stage paradigm~\citep{kirichenkolast}: first training with ERM to learn representations, then retraining only the final layer with group-balanced data. This approach suffers from critical limitations, including the requirement for high-quality, balanced validation data and complete failure when ERM cannot capture core features, such as when minority groups are entirely absent from training data. In contrast, our single-stage approach achieves substantially better worst-group accuracy than these alternatives while avoiding DFR's structural constraints. Most importantly, our method requires no balanced validation data or multistage training, delivering superior robustness with greater practical simplicity.
\begin{table}[htbp]
\centering
\caption{Sensitivity analysis of embedding regularization on CelebA. 
Worst-group accuracy remains relatively stable across settings, with the best results observed when moderate $\lambda_{\text{core}}$ is applied.}
\resizebox{\textwidth}{!}{%
\begin{tabular}{lcc|lcc|lcc}
\toprule
\multicolumn{9}{c}{\textbf{CelebA}} \\ 
\midrule
\multicolumn{3}{c|}{both $\lambda=0$} &
\multicolumn{3}{c|}{$\lambda_{\text{core}}$ (fixed $\lambda_{\text{spur}}$)} &
\multicolumn{3}{c}{$\lambda_{\text{spur}}$ (fixed $\lambda_{\text{core}}$)} \\
\cmidrule(lr){1-3} \cmidrule(lr){4-6} \cmidrule(lr){7-9}
Setting & Avg Acc & Worst Acc &
$\lambda_{\text{core}}$ & Avg Acc & Worst Acc &
$\lambda_{\text{spur}}$ & Avg Acc & Worst Acc \\
\midrule
0   & 91.4 $\pm$0.6 & 89.0 $\pm$0.7 &
0.0 & 92.5 $\pm$0.1 & 91.1 $\pm$0.2 &
0.0 & 92.0 $\pm$0.3 & \textbf{91.0 $\pm$0.2} \\
--  & --            & --            &
0.5 & 92.7 $\pm$0.2 & \textbf{91.4 $\pm$0.1} &
0.5 & 91.7 $\pm$0.6 & 90.9 $\pm$0.7 \\
--  & --            & --            &
1.0 & 92.7 $\pm$0.2 & 90.9 $\pm$0.5 &
1.0 & 92.0 $\pm$0.1 & 90.7 $\pm$0.1 \\
\bottomrule
\end{tabular}}
\vspace{-15pt}
\label{tab:celeba_lambda_sweep}
\end{table}

\subsection{Qualitative Analysis} \label{sec:add_Qualitative}
We provide comprehensive quantitative evaluations, including additional sensitivity analysis across hyperparameters.

\paragraph{Embedding Regularization Analysis} In addition to the results in Table~\ref{tab:lambda_sweep}, we conduct sensitivity analysis for $\lambda_{\text{core}}$ and $\lambda_{\text{spur}}$ on the large-scale real-world dataset CelebA. The results confirm the generalizability of our ColorMNIST findings beyond synthetic settings. CelebA exhibits even more pronounced performance improvements with appropriate hyperparameter values, highlighting the effectiveness of embedding regularization in mitigating spurious correlations in realistic scenarios, as shown in Table~\ref{tab:celeba_lambda_sweep}

\begin{figure}[h]
\centering
% {\color{red} [path fixed: iclr2026/ \textrightarrow{} ICLR2026\_camera\_ready/]}
\includegraphics[width=\linewidth]{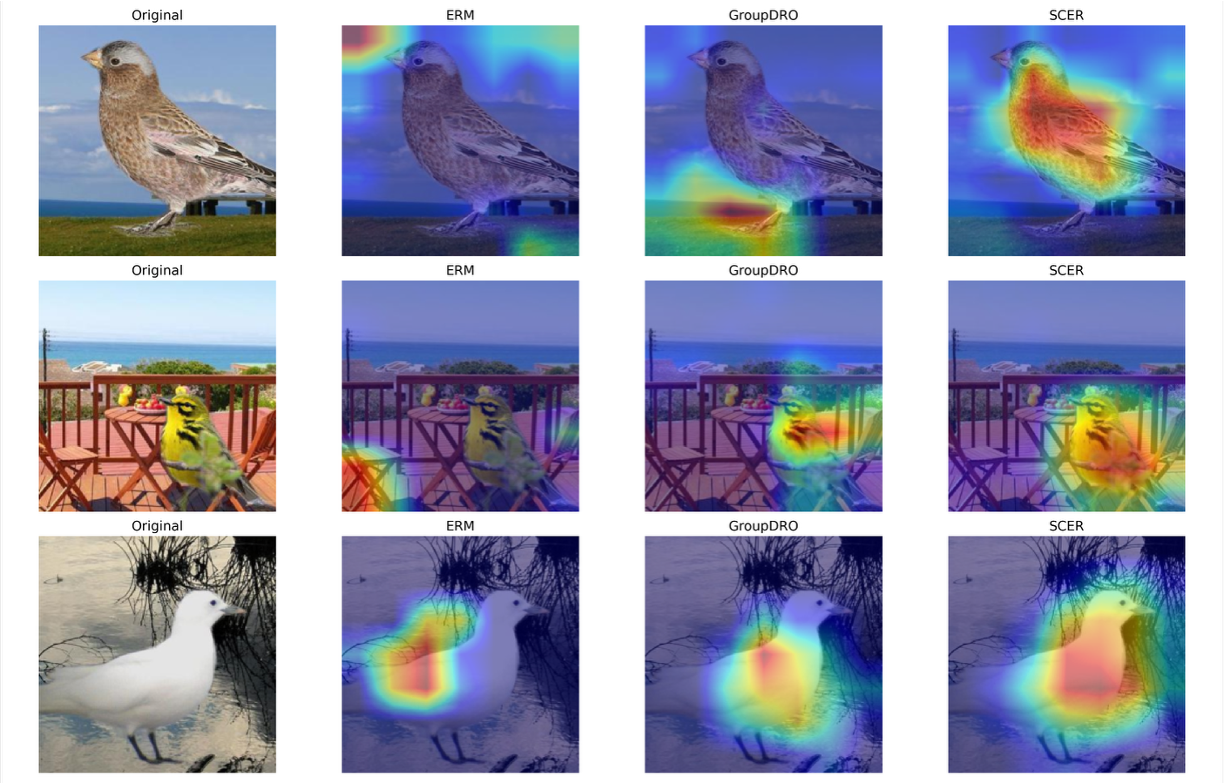}
\caption{Comparison of Grad-CAM visualizations on the Waterbirds dataset. SCER directs attention to meaningful features, reducing focus on spurious regions.}
\label{fig:fig_gradcam_dro}
\vspace{-8pt}
\end{figure}

\section{Statement on the Use of Large Language Models}

Large Language Models (LLMs) were used in a limited capacity during the preparation of this research. Specifically, LLMs were used to check grammar and refine sentence structure after the initial draft was completed, primarily to correct awkward expressions and maintain consistency in writing style. However, all core research ideas, analytical methodologies, interpretations of the results, and conclusions were developed entirely by the authors. The LLM did not contribute to any creative content or academic judgments. This use of LLMs was conducted within limits that do not compromise the originality or academic integrity of the research.

\end{document}